%% file: 00_neurips_2026.tex
\documentclass{article}

\usepackage[preprint]{neurips_2026}

\usepackage[utf8]{inputenc} 
\usepackage[T1]{fontenc}    
\usepackage{hyperref}       
\usepackage{url}            
\usepackage{booktabs}       
\usepackage{amsfonts}       
\usepackage{nicefrac}       
\usepackage{microtype}      
\usepackage{xcolor}         

\usepackage{enumitem}

\usepackage{amsmath}

\include{__macros}

\title{Validating Causal Abstraction Metrics on \\ Simulated Complex Systems}

\author{
Maxime Méloux$^1$ \quad Tiago Pimentel$^2$ \quad François Portet$^1$ \quad Maxime Peyrard$^1$ \\
$^1$Université Grenoble Alpes, CNRS, Grenoble INP, LIG \quad $^2$ETH Zürich\\
\texttt{\{melouxm, portetf, peyrardm\}@univ-grenoble-alpes.fr}\\
\texttt{tiago.pimentel@inf.ethz.ch}\\
}

\begin{document}

\usetikzlibrary{calc}

\maketitle

\begin{abstract}
A central goal of science is to produce valid explanations of complex systems: high-level causal accounts that faithfully reflect the behavior of lower-level mechanisms. Yet no consensus exists on how to measure whether a proposed high-level explanation is actually valid. We introduce a benchmark of ten complex systems spanning both discrete and continuous state spaces, as well as static and dynamical regimes, each equipped with consensual ground-truth causal explanations and invalid contrastive conditions. Within a unified causal abstraction framework, we systematically evaluate over thirty candidate metrics drawn from observational, functional, information-theoretic, and causal families. Our results show that only the latter reliably discriminates valid from invalid abstractions, and only when incorporating faithfulness testing over unmapped variables. Building on these findings, we introduce the Causal Abstraction Error (CAE), a continuous validity metric with an explicit faithfulness test, which passes all discrimination tests across every system and can converge with as few as 30 sampled interventions. We offer it as a general-purpose metric for the discovery and validation of high-level explanations.\looseness=-1
\end{abstract}

\input{sections/01_introduction}
\input{sections/02_background}
\input{sections/03_cae}
\input{sections/04_benchmark}
\input{sections/05_results}
\input{sections/06_discussion}
\input{sections/07_acknowledgements}

\clearpage
\bibliographystyle{plainnat}
\bibliography{causal_abs}


\clearpage
\appendix

\input{appendix/full_results}

\input{appendix/extended_related_work}
\input{appendix/controlled_experiments}
\input{appendix/theoretical}
\input{appendix/systems}
\input{appendix/baseline_details}

\input{appendix/case_study}



\end{document}

%% file: __macros.tex
\usepackage{tikz}
\usepackage{tikz-cd}
\usetikzlibrary{arrows.meta, positioning}

\usepackage{tabularx}
\usepackage{array}
\usepackage{makecell}
\usepackage{colortbl} 
\usepackage[dvipsnames]{xcolor}
\usepackage{wrapfig}

\definecolor{dynrow}{RGB}{245,248,255}

\newcolumntype{Y}{>{\raggedright\arraybackslash}X}
\newcolumntype{C}[1]{>{\centering\arraybackslash}p{#1}}

\hypersetup{
  colorlinks=true,
  linkcolor=RubineRed,
  citecolor=MidnightBlue,
  urlcolor=MidnightBlue,
  pdfborder={0 0 0}
}

\newtheorem{proposition}{Proposition}[section]
\newtheorem{definition}{Definition}[section]

\newtheorem{proof}{Proof}[section]
\newtheorem{assumption}{Assumption}[section]
\newtheorem{remark}{Remark}[section]

\usepackage[textsize=tiny,disable]{todonotes}
\newcommand{\note}[4][]{\todo[author=#2,color=#3,size=\scriptsize,fancyline,caption={},#1]{#4}} 
\newcommand{\tiago}[2][]{\note[#1]{\textbf{Tiago}}{cyan!30}{#2}}

\newcommand{\cae}{\textnormal{\scriptsize CAE}}
\newcommand{\caeup}{\cae\ensuremath{_{\uparrow}}}
\newcommand{\caedown}{\cae\ensuremath{_{\downarrow}}}
\newcommand{\caenf}{\cae\ensuremath{_{\text{NF}}}}
\newcommand{\caeupnf}{\cae\ensuremath{_{\uparrow\text{NF}}}}
\newcommand{\caedownnf}{\cae\ensuremath{_{\downarrow\text{NF}}}}

%% file: sections/01_introduction.tex
\section{Introduction}

A central goal of science is to produce \textbf{explanations}: not merely descriptions or predictions, but accounts of \textit{why} phenomena occur \citep{hempel1965aspects, woodward2003making}. While the philosophy of science has long debated what explanations are, a working consensus has emerged around a few core desiderata: good scientific explanations should be causally informative \citep{woodward2003making,pearl_causality,salmon1984scientific}, parsimonious \citep{Kitcher1962-KITEUA,batterman2014minimal}, and appropriately scoped to the context and level of description at which the target phenomenon is best characterized \citep{lombrozo2006structure,Potochnik2017-POTIAT-3}.
Finding good explanations is particularly difficult when the system under study exhibits what Warren Weaver famously called \textit{organized complexity} \citep{weaver1991science}: systems with many interacting parts that are neither so disordered as to yield to statistical averaging, nor so simple as to admit direct analytic treatment. 
The appropriate high-level variables for explanation must be discovered, and the class of functions that legitimately aggregate low-level quantities into those variables is itself a subject of inquiry \citep{doi:10.1073/pnas.1314922110,Potochnik2017-POTIAT-3}.

This raises a fundamental challenge: what are useful high-level variables, and how should they be defined from lower-level quantities? Different fields studying different classes of complex systems have developed their own candidate answers, such as firing rates and population codes in systems neuroscience \citep{cunningham2014dimensionality}, or species abundance and trophic levels in ecology \citep{loreau2010populations}.
However, a unified cross-disciplinary methodology for evaluating high-level explanations of complex systems remains elusive. A sobering illustration of why comes from \cite{jonas2017could}, who applied the standard causal and statistical toolkit of neuroscience to a fully observable microprocessor: a relatively simple system engineered for modular, hierarchical organization. In doing so, they failed to recover meaningful high-level properties of this system.
The recent success of artificial intelligence (AI) offers a similar account. Similarly to microprocessors, modern AI systems are structurally complex, yet they are fully observable and perfectly manipulable, a rare property in the natural sciences. 
This privileged epistemic access makes AI systems a methodological laboratory for the science of complex systems \citep{Holtzman2025}, to develop and test explanation discovery procedures in simplified settings. Yet, AI interpretability has also proven sobering, with a large tendency to produce false positive findings and explanations that are unstable and not generalizable \citep{hewitt-liang-2019-designing, ravichander-etal-2021-probing, meloux2025dead}.

A general metric for evaluating proposed high-level explanations of low-level complex systems is therefore a prerequisite to move forward: without it, the discovery and evaluation of candidates cannot be done rigorously, and progress is hard to measure or transfer across domains. Yet, no such agreed-upon, general metric exists. 
A growing body of work has begun addressing this gap through the lens of \textbf{causal abstraction} \citep{Beckers_Halpern_2019}, the idea that a high-level causal model is a valid explanation of a low-level system if interventions at the high level correspond faithfully to interventions at the low level under a postulated abstraction map. This framework is conceptually appealing and promising, but existing implementations differ in important ways: in whether they treat abstraction as exact or approximate \citep{pmlr-v115-beckers20a}, and in how they handle distributions over inputs or over interventions \citep{geiger_causal_2021, JMLR:v26:23-0058}. Current concrete metric implementations developed in the field of AI interpretability have also been shown to suffer from the global issues of striking false positives and non-generalizable findings \citep{sutter2025nonlinearrepresentationdilemmacausal, meloux2025dead}. 

\textbf{In this work,}  we set out to systematically test, compare, and benchmark a wide range of proposed explanation metrics, observational, information-theoretic, symbolic, and causal, across a diverse collection of idealized complex systems with known, consensual high-level explanations. Rather than confining ourselves to a single domain, we deliberately span systems that vary across multiple dimensions: (i) with discrete versus continuous state spaces, (ii) with static versus temporal dynamics, (iii) with disordered versus structured complexity (in Weaver's sense). For each system, we manually construct the alignment function mapping low-level variables to high-level ones, implement all pairs within a shared conceptual formalism (causal abstraction) and a unified programmatic API supporting both observational and interventional queries. Finally, we handcraft contrastive perturbations, modifications to either the low-level or high-level model that render the proposed explanation invalid, allowing us to test whether each candidate metric correctly recognizes broken explanations as well as valid ones. This is an empirical test of the practical operationalization of existing conceptual ideas. 

Our results provide strong support for the causal abstraction research program, but only for specific distributional variants. 
We further introduce the Causal Abstraction Error (\cae{}), an error measure that incorporates a principled treatment of distributional inputs and interventions, a built-in faithfulness measurement, and produces a real-valued degree-of-violation score. The \cae{} passes all contrastive tests across every system class we examine and outperforms related causal abstraction variants.

%% file: sections/02_background.tex
\section{Metrics of Explanation Validity}
\label{sec:background}

Following the framework of \citet{pearl_causality} and \citet{Beckers_Halpern_2019}, we model the studied system as a \emph{structural causal model} (SCM). \citet{NEURIPS2024_26b8e3dc} previously adopted Pearl's modeling assumption that simulated complex systems belong to this class.\looseness=-1

\begin{definition}[Structural Causal Model]
\label{def:scm}
A structural causal model is a tuple $\mathcal{M} = \langle \mathcal{U},\, \mathcal{V},\, \mathcal{R},\, \mathcal{F},\, P_{\mathcal{U}} \rangle$, where $\mathcal{U}$ is a set of \emph{exogenous} variables with joint distribution $P_{\mathcal{U}}$; $\mathcal{V}$ is a set of \emph{endogenous} variables; $\mathcal{R} = \{\mathcal{R}_W\}_{W \in \mathcal{U} \cup \mathcal{V}}$ assigns a range to every variable; and $\mathcal{F} = \{f_V\}_{V \in \mathcal{V}}$ is a set of \emph{structural equations}, each determining $V$ from its direct causes $\mathrm{Pa}(V) \subseteq \mathcal{V}$ and a noise term $\mathcal{U}_V \subseteq \mathcal{U}$. For any subset $\mathcal{W} \subseteq \mathcal{U} \cup \mathcal{V}$, we write $\mathcal{R}_\mathcal{W}$ for its joint range.
\end{definition}

We write $\mathcal{M}(\mathcal{W} \mid u, \mathrm{do})$ for the joint value assumed by the variables $\mathcal{W} \subseteq \mathcal{V}$ in $\mathcal{M}$ under exogenous realization $u \in \mathcal{R}_\mathcal{U}$ and intervention $\mathrm{do}$ (empty $\mathrm{do}$ is the observational case); for a single variable $V$, we abbreviate $\mathcal{M}(V \mid u, \mathrm{do})$. For deterministic models, we omit the vacuous dependence on $u$.

A \emph{computational explanation} is a surrogate model $\mathcal{E}$ taken from some admissible class $\mathfrak{E}$, intended to account for the behavior of $\mathcal{M}$ at a relevant level of abstraction. 
The central question studied in this paper is: \textit{Under what conditions is $\mathcal{E}$ a valid computational explanation of $\mathcal{M}$ under $P_{\mathcal{U}}$?} Radically different answers to this question have been proposed. We briefly discuss here the ones we evaluate and offer a more in-depth description in Appendix~\ref{app:extended_background}. 

\paragraph{Observational validity.}
The most elementary criterion is \emph{observational equivalence}: $\mathcal{E}$ should reproduce the measurable outputs of $\mathcal{M}$ under the \textit{natural} input distribution. For deterministic settings, this is operationalized via pointwise discrepancy metrics, including MSE, RMSE, $L^2$, and their normalized variant NMSE, together with the coefficient of determination $R^2{}$~\citep{koza1994genetic,ljung1999}.
Complexity-regularized variants such as AIC~\citep{akaike2003new}, BIC~\citep{schwarz1978estimating}, MDL~\citep{RISSANEN1978465}, and Mallows' $C_p$~\citep{mallows1973,dst2009} penalize explanation complexity to prevent overfitting. When $\mathcal{M}$ is stochastic, distributional similarity is measured via measures like KL divergence, the symmetric JS divergence, or the kernel-based MMD~\citep{JMLR:v13:gretton12a}; HSIC~\citep{gretton05} can further test whether residuals of the explanation are independent of the input, probing global agreement.
For dynamical systems, validity must hold over entire trajectories: we evaluate trajectory MSE, dynamic time warping [DTW; ~\citealp{sakoe2003dynamic}], temporal autocorrelation matching, and spectral analysis~\citep{percival1993spectral}. Finally, we also consider symbolic regression methods like the SINDy evaluator~\citep{brunton_discovering_2016}, which scores an explanation by the residual of its governing equations on the observed state derivatives.
All observational criteria share a fundamental limitation called \emph{equifinality}~\citep{pmlr-v185-valogianni22a,collins2024}: distinct mechanisms can be observationally indistinguishable, yielding explanations that often do not generalize~\citep{Ghorbani_Abid_Zou_2019, Kindermans2019, meloux2025mechanistic}.\looseness=-1

\paragraph{Functional validity.}
Functional criteria require that $\mathcal{E}$ additionally reproduce the \emph{input--output response profile} of $\mathcal{M}$. \emph{Variance decomposition} via ANOVA or Sobol sensitivity indices~\citep{sobol2001global} decomposes output variance into per-input contributions and interactions, with validity requiring agreement between the indices of the high-level model $\mathcal{E}$ and the low-level model $\mathcal{M}$. At the local level, the infidelity metric~\citep{yeh2019fidelity}, adapted to the two-model setting, measures whether $\mathcal{E}$ and $\mathcal{M}$ agree on their per-input attribution vectors. This replaces the original single-model attribution formulation with a two-model sensitivity comparison, using $\mathcal{E}$ itself as the local linear approximation. LIME~\citep{Ribeiro_lime} and SHAP~\citep{NIPS2017_8a20a862} can be understood as proxies for this notion. Globally, relational fidelity ~\citep{collins2024} requires consistency of output differences across all input pairs, ensuring that $\mathcal{E}$ preserves the functional geometry of $\mathcal{M}$. However, functional criteria still remain agnostic to internal mechanisms.

\paragraph{Information-theoretic and representational validity.}
A third class of criteria requires that $\mathcal{E}$ reproduce the representational structure of $\mathcal{M}$. This can be measured by representational similarity analysis~\citep{kriegeskorte2008representational}. 
Similarly, \emph{probing accuracy}~\citep{alain2016understanding, pimentel-etal-2020-information} measures whether the variables posited by $\mathcal{E}$ are decodable from the internal states of $\mathcal{M}$. 
The \emph{information bottleneck} (IB) Lagrangian ~\citep{tishby2000informationbottleneckmethod} characterizes valid representations as those achieving an optimal compression--relevance trade-off.
\emph{Complexity shift}~\citep{ZENIL20191160} measures whether $\mathcal{E}$ induces comparable transformations under perturbation, assessed via Kolmogorov complexity. These criteria capture \emph{what} information is present, but not \emph{how} it is causally transformed~\citep{elazar-etal-2021-amnesic,lasri-etal-2022-probing,exp_unders}.

\paragraph{Causal and interventional validity.}
The strictest criteria require $\mathcal{E}$ to correctly reproduce $\mathcal{M}$'s behavior under \emph{active interventions}. \emph{Symbion}~\citep{gritti2019symbion} is a binary-analysis technique that interleaves symbolic and concrete execution, using consistency between symbolic and concrete states to guide analysis through code regions that are difficult to handle symbolically. We draw inspiration from this notion of symbolic--concrete consistency to define a Symbion-inspired validity metric. The \emph{causal sensitivity index} and the related \emph{structural deviation} metric~\citep{katende2025causaloperatordiscoverypartial} measure how much the interventional consistency score between $\mathcal{E}$ and $\mathcal{M}$ changes when each parameter of $\mathcal{E}$ is zeroed-out or perturbed by a fixed fraction, respectively.
Mechanistic interpretability~\citep{olah2020zoom} seeks to recover the sparse \emph{circuits}~\citep{NEURIPS2020_92650b2e,meng2022locating} implementing a given behavior, with validity defined by output agreement under computational interventions.  Finally, $\epsilon$-machines~\citep{PhysRevLett.63.105,shalizi_computational_2001} define the canonical valid explanation as the minimal sufficient statistic for predicting the next steps of a dynamical system, $\mathcal{M}$, potentially under interventions. The \emph{dynamic causal consistency} (DCC) criterion~\citep{nc-mcm} offers a practical implementation of this idea.\looseness=-1 

\subsection{Zooming in on Causal Abstraction}
\label{ssec:causal_abs}
Causal abstraction aims to provide a mathematical foundation for determining when high-level models can safely be considered as causal proxies for low-level ones~\citep{pmlr-v51-chalupka16,Rubensteinetal17,Beckers_Halpern_2019}. It is the central conceptual framework in this work. We explore its formal definitions and the ongoing debates regarding its precise framing.

\paragraph{Overview.}
We denote $\mathcal{M}$ as the micro- (or low-level) model and $\mathcal{E}$ as the macro- (or high-level) model. $\mathcal{E}$ is now also defined as an SCM. An abstraction between these models is defined by a mapping $\tau: \mathcal{R}^{\mathcal{M}} \to \mathcal{R}^{\mathcal{E}}$ which specifies how low-level computations in $\mathcal{M}$ translate to high-level ones in $\mathcal{E}$. \citet{zennaro2022abstraction} reviews properties of existing definitions of valid abstractions and their evolution \citep{Rubensteinetal17, Beckers_Halpern_2019, pmlr-v115-beckers20a, rischel2020category, rischel2021compositional, otsuka2022equivalence}. Two key categories of properties are relevant for us: \textbf{formal structural properties} of the mapping itself and \textbf{consistency properties}, which ensure that reasoning with either the micro- or the macro-model leads to causally consistent outputs.

\paragraph{Structural Properties.}
Structural properties define constraints on the abstraction mapping $\tau$, determining which types of abstraction maps are \textit{permissible}. Minimal but sufficient properties to model the complex systems we consider are as follows:

\textbf{1. Surjective disjoint coarse-graining of variables}: We take the coarse-graining map to be $a : \mathcal{V}_{\mathcal{M}} \to \mathcal{V}_{\mathcal{E}} \cup \{\Phi\}$, surjective onto $\mathcal{V}_{\mathcal{E}}$, where $\Phi$ is a designated dummy sentinel that collects all \textit{unmapped} micro-variables; the (possibly empty) unmapped set is $a^{-1}(\Phi)$. The map assigns each micro-variable to exactly one macro-variable or to $\Phi$, and surjectivity guarantees that every substantive macro-variable is grounded in at least one micro-variable. This corresponds to the \textbf{constructive} setting~\citep{Beckers_Halpern_2019}, which is widely used \citep{zennaro2023jointly, kekic2023targeted, zhu2024unsupervised} and that we adopt throughout. The dummy $\Phi$ carries no structural equation and no value map; it plays an operational role only during sampling, where intervening on it represents interventions on the unmapped micro-variables.

\textbf{2. Surjective value mapping}: The coarse-graining map only associates variables (structural). We must also specify how micro-states map to macro-states (contents). This is achieved via a set of \textbf{value maps} $\tau_X: \mathcal{R}^{\mathcal{M}}_{a^{-1}(X)} \to \mathcal{R}^{\mathcal{E}}_X$, which determines, for each macro-variable $X$, how its state arises from the states of its associated micro-variables. We require each value map to be surjective to guarantee that every possible macro-state can be realized from some micro-states.

Finally, the abstraction map $\tau$ consists of a surjective (disjoint) coarse-graining map $a$, the set of all surjective value maps ($\tau_X$), and the noise map $\tau_u$.

\paragraph{Consistency Property.}
For a given macro-intervention $\nu = (X\!=\!x)$, applied as $\mathrm{do}(\nu)$, multiple compatible micro-level interventions $\mu = (a^{-1}(X) = \tau_X^{-1}(x))$ may exist, applied as $\mathrm{do}(\mu)$, since $\tau_X$ is not typically injective (multiple micro-states may realize the same macro-state). Let $\mu$ be one such micro-intervention. We overload $\tau$ to the induced map on interventions, lifting targets via $a$ and values via $\tau_X$.Then, the consistency property requires that $\tau$'s noise-variable map $\tau_u: \mathcal{R}^{\mathcal{M}}_{\mathcal{U}} \to \mathcal{R}^{\mathcal{E}}_{\mathcal{U}}$ satisfy, for every compatible pair of interventions $(\nu, \mu)$ and every $u \in \mathcal{R}_{\mathcal{U}_\mathcal{M}}$, $\mathcal{E}(\mathcal{V}_\mathcal{E} | \tau_u(u), \mathrm{do}(\nu)) = \tau(\mathcal{M}\left(\mathcal{V}_\mathcal{M} | u, \mathrm{do}(\mu)\right))$. This pointwise counterfactual consistency requires both models to coincide for every exogenous  realization. When $\mathcal{E}$ is deterministic, this equation reduces to $\mathcal{E}(\mathcal{V}_\mathcal{E} \mid \mathrm{do}(\nu)) = \tau(\mathcal{M}(\mathcal{V}_\mathcal{M} \mid u, \mathrm{do}(\mu)))$, which is the primary criterion implemented in this work. This is typically visualized via a commuting diagram~\citep{Beckers_Halpern_2019}:
\begin{center}
\begin{tikzcd}
\mathcal{E}(X) \arrow[r, "\mathrm{do}(\nu)"] &  \mathcal{E}(Y)\\
\mathcal{M}(a^{-1}(X)) \arrow[r, "\mathrm{do}(\mu)"] \arrow[u, "\tau_{X}"]& \mathcal{M}(a^{-1}(Y)) \arrow[u, "\tau_Y"]
\end{tikzcd}
\end{center}

\paragraph{Measuring Consistency Errors.}
From the consistency requirement, it is possible to derive an error measure, quantifying how far a macro-model (and its associated abstraction map) is from being a valid causal abstraction of a given micro-model. A general formulation of such a measure is given by:
\[
    E_{\tau}(\nu, \mu) = D\!\left(\mathcal{E}(\mathcal{Y} \mid \mathrm{do}(\nu)),\;\tau_\mathcal{Y}\!\left(\mathcal{M}(a^{-1}(\mathcal{Y}) \mid \mathrm{do}(\mu))\right)\right),
\]
where $\mathcal{Y} \subseteq \mathcal{V}_\mathcal{E}$ is a set of macro-variables (scored jointly),
and $D$ is a pointwise dissimilarity metric (e.g., MSE) in the deterministic setting, or a distributional divergence for stochastic models. The function $E_{\tau}(\nu, \mu)$ quantifies the degree of consistency violation for a given macro-intervention $\nu$ and one associated micro-intervention $\mu$.
Different approaches have been proposed to define a \emph{global} consistency error: \citet{zennaro2023jointly} adopts the worst-case error over $(\nu, \mu)$, defining $E_{\tau} = \sup_{\nu, \mu} E_{\tau}(\nu, \mu)$ and using the Jensen-Shannon divergence as the choice for $D$. Then, \citet{zhu2024unsupervised} and \citet{kekic2023targeted} employ the Kullback-Leibler divergence and compute the expectation over interventions, facilitating sample-based approximations. A different approach is interchange intervention accuracy (IIA;~\citealp{pmlr-v162-geiger22a}), developed especially for neural networks. IIA requires a variable correspondence (a coarse-graining) and a mechanism to copy the micro-state realizing a source macro-value into a target run, but it does not require a full surjective value map with invertible preimages, and it checks consistency only at output variables. Even perfect IIA therefore does not certify causal abstraction in the general sense, since intermediate variables may still violate consistency~\citep{meloux2025dead,sutter2025nonlinearrepresentationdilemmacausal}.

\paragraph{Faithfulness.}
The standard definition of causal abstraction does not require all low-level variables in $\mathcal{V}_{\mathcal{M}}$ to be mapped: $a^{-1}(\Phi)$ may be non-empty. This is natural for AI interpretability, where explanations often posit that only a small subset of a neural network is relevant to a behavior \citep{olah2020zoom,NEURIPS2023_34e1dbe9}, and similar sparsity assumptions appear in biological systems~\citep{rochefort2009sparsification,meunier2010modular}. Yet, declaring excluded components causally irrelevant is a strong empirical claim, that must be tested. In circuit discovery, this is captured by \textbf{faithfulness}: unmapped neurons should be intervened on to verify that they do not influence the proposed circuit computation \citep{hanna2024have}. Existing operationalizations of causal abstraction largely take the relevant variable set as modeler-provided, rather than testing whether excluded variables are genuinely inert~\citep{zennaro2023jointly,kekic2023targeted,zhu2024unsupervised}. Faithfulness checks reject explanations that incorrectly assume away causal influence from omitted components.

%% file: sections/03_cae.tex
\section{The Causal Abstraction Error}
\label{sec:cae}

Building upon existing causal alignment error measures \citep{zennaro2023jointly, kekic2023targeted, zhu2024unsupervised}, we define a new metric that explicitly captures faithfulness in causal abstraction.
We first define what we term an instance-level abstraction error (\textnormal{\scriptsize IAE}):
\begin{align}
    \textsc{iae}(\mathcal{M} \overset{\tau}{\to} \mathcal{E},\mathcal{Y} \mid u,\, \mathrm{do}(\mu,\nu))
 = D\!\left(\mathcal{E}(\mathcal{Y}\mid \tau_u(u),\, \mathrm{do}(\nu)),\;
   \tau_{\mathcal{Y}}(\mathcal{M}(a^{-1}(\mathcal{Y})\mid u,\, \mathrm{do}(\mu)))\right)
\end{align}
where $\mathcal{Y} \subseteq \mathcal{V}_\mathcal{E}$ is a set of macro-variables (scored jointly), and $D$ is a dissimilarity over $\mathcal{R}_\mathcal{Y}^\mathcal{E}$.
In words, this metric computes whether, given a controlled exogenous value $u$ and intervention $\mathrm{do}(\mu, \nu)$, the two SCMs produce compatible outputs for variables $\mathcal{Y}$ at a counterfactual level.
However, this is an instance-level error.
In practice, we are interested in how our high-level explanation performs across several exogenous values and interventions.
We thus define two variants of \textbf{causal abstraction error} (\cae{}) that differ in how we aggregate these values:
\begin{subequations}
\begin{align}
\cae_{\downarrow}^{\text{Agg}}(\mathcal{M} \overset{\tau}{\to} \mathcal{E}, \mathcal{Y}) &= \text{Agg}_{\nu \sim P_I,\; \mu \sim P_{\tau^{-1}\!(\nu)},\; u \sim P_{\mathcal{U}_\mathcal{M}}}\!\!\left[ \textsc{iae}\left(\mathcal{M} \overset{\tau}{\to} \mathcal{E},\mathcal{Y} \mid u,\, \mathrm{do}(\mu,\nu)\right) \right]\\
\cae_{\uparrow}^{\text{Agg}}(\mathcal{M} \overset{\tau}{\to} \mathcal{E}, \mathcal{Y}) &= \text{Agg}_{\mu \sim P_I,\; u \sim P_{\mathcal{U}_\mathcal{M}}}\left[ \textsc{iae}\left(\mathcal{M} \overset{\tau}{\to} \mathcal{E},\mathcal{Y} \mid u,\, \mathrm{do}(\mu,\nu)\right) \right]
\label{eq:cae}
\end{align}
\end{subequations}
where $P_I$ denotes a distribution over interventions on the set of micro- or macro-variables (in the $\uparrow$ and the $\downarrow$ variant, respectively);
and $P_{\tau^{-1}(\nu)}$ denotes a distribution over micro-interventions $\mu$ compatible with $\nu$, i.e., for which $\nu = \tau(\mu)$.
\cae{} thus aggregates $\textsc{iae}$ values using a function $\text{Agg}$ which operates over the indexed variables $(\nu,\mu,u)$ through distributions $P_I$, $P_{\tau^{-1}\!(\nu)}$, and $P_{\mathcal{U}_\mathcal{M}}$; this function can be, e.g., the expectation, supremum, infimum, or combinations thereof.
Notably, the aggregation over exogenous variables $u \sim P_{\mathcal{U}_\mathcal{M}}$ could be folded with the dissimilarity metric $D$ and replaced by a distributional divergence instead (e.g., JSD, KL, MMD) for stochastic models, computing e.g. $\mathrm{KL}\!\left(\mathcal{E}(\mathcal{Y} \mid \mathrm{do}(\tau(\mu))),\; \tau_{\mathcal{Y}}\!\left(\mathcal{M}(a^{-1}(\mathcal{Y}) \mid \mathrm{do}(\mu))\right)\right)$, where $u$ is marginalized out; in that case, $\cae$ would measure causal consistency at the intervention (instead of counterfactual) level.

In the top-down variant ($\downarrow$), macro-interventions are grounded to compatible micro-interventions via $\tau_X^{-1}$; in the bottom-up one ($\uparrow$), micro-interventions are drawn directly and lifted to the macro level via $\tau_X$. Note that these two approaches are equivalent in their zero set; however, their non-zero estimates may differ due to sampling decisions.
Both arguments of $D$ lie in the macro-variable range $\mathcal{R}_\mathcal{Y}$, avoiding ill-posedness from $\tau_{\mathcal{Y}}$ being non-injective. 
Faithfulness is integrated in both variants: $P_I$ can select the dummy macro-variable $\Phi$ as an intervention target (or, in the $\uparrow$ variant, it can target the unmapped micro-variables $a^{-1}(\Phi)$), which the proposed explanation claims are causally inert. We also define non-faithful variants, denoted by the subscript $_\text{NF}$: in those, $P_I$ excludes $\Phi$.

\paragraph{Relation to \citet{zennaro_quantifying_2023}.}
That work defines four related metrics; ISIL and IC are the closest to \caedown{} and \caeup{} respectively. Beyond the faithfulness consideration, three differences are worth noting. First, their metrics aggregate by supremum; ours use expectation, which is easier to estimate by sampling. Second, their framework is restricted to finite variable domains, whereas \cae{} handles continuous value maps directly. Third, their stochastic treatment prevents compositionality of valid abstractions from being preserved, which is not the case for the \cae{} (see Appendix~\ref{app:theoretical}).

\subsection{Practical Estimation}
\label{ssec:estimation}

In all of our experiments, we use the expectation $\mathbb{E}$, approximated by the empirical mean, as the aggregation operator $\text{Agg}$ in \cae{}'s definition.
Both \cae{} variants thus use expectations amenable to Monte Carlo estimation.
The formalism samples only exogenous variables $u \in \mathcal R_{\mathcal U}^{\mathcal M}$: the macro-level exogenous variables enter only through the mapping $\tau_u$. In our systems, $\mathcal E$ is deterministic, making $\tau_u$ vacuous (the identity, as there is no macro-exogenous noise to map), so each Monte Carlo sample is determined solely by the draw of the intervention $(\nu,\mu)$ together with any sampled $u \in \mathcal R_{\mathcal U}^{\mathcal M}$.
For models $\mathcal M$ that are internally stochastic (e.g., the Lennard--Jones gas), additional simulator randomness is generated independently for each model evaluation. Since our implementation does not specify a coupling map for this internal randomness across the micro- and macro-level models, the resulting comparison is distributional rather than pointwise. Accordingly, these experiments instantiate the interventional interpretation of \cae{} (the folded-divergence form) rather than a counterfactual interpretation, which would require a well-defined coupling $\tau_u$ that identifies corresponding exogenous realizations across the two levels. We return to this limitation in Section~\ref{ssec:limitations}.

\paragraph{Sampling macro-interventions ($P_I$ in \caedown{}).}
We define $P_I$ as the following hierarchical distribution. First draw a cardinality $k \sim \mathrm{Uniform}\{1,\ldots,\min(|\mathcal{V}_\mathcal{E}|, \text{max\_interventions})\}$; then draw the target set $\mathcal{S}$ uniformly among all size-$k$ subsets of $\mathcal{V}_\mathcal{E}$; finally draw intervention values $\mathbf{x}_\mathcal{S}$ with each $x_V$ ($V\in\mathcal{S}$) i.i.d.\ uniform on the domain $\mathcal{R}_V$. We draw $n$ such interventions $\{\nu^{(i)}\}_{i=1}^n \sim P_I$.

\paragraph{Sampling compatible micro-interventions ($P_{\tau^{-1}(\nu)}$ in \caedown{}).}
For each macro-intervention $\nu$ on variables $\mathcal{S}$, we sample a compatible micro-intervention by applying the inverse value map to each intervened variable: for $V \in \mathcal{S}$ with intervened value $x_V$, we compute the preimage $\tau_V^{-1}(x_V)$ and sample a micro-state from it, which corresponds to the construction from~\cite{zennaro2023jointly}.

\paragraph{Sampling micro-interventions ($P_I$ in \caeup{}).}
We draw $n$ micro-interventions $\{\mu^{(i)}\}_{i=1}^n \sim P_I$ directly, using the same multi-node scheme as above but operating over micro-variable domains. The corresponding macro-intervention is obtained by applying $\tau_X$ to the intervened micro-state.

\paragraph{Testing faithfulness.} We test faithfulness by augmenting the macro-intervention pool with the dummy $\Phi$. At each sample, $\Phi$ is included in the selected intervention targets with a custom probability. When selected, we first ground the substantive interventions to micro-values as usual, then overwrite the micro-value of each unmapped variable with sampled noise before executing $\mathcal{M}$ (in the $\uparrow$ variant, $a^{-1}(\Phi)$ is instead included directly in the micro-intervention pool): Gaussian noise for continuous variables, uniform $\{-1,0,1\}$ perturbations for integers, and random resampling for Booleans. Since $\Phi$ is a sentinel, it induces no intervention on the macro side: any shift in the abstracted output relative to the unperturbed run reflects a causal influence of $a^{-1}(\Phi)$ and increases the consistency error.

%% file: sections/04_benchmark.tex
\section{A Benchmark of Simulated Complex Systems}
\label{sec:benchmark}
A central obstacle in evaluating explanation-validity metrics is the lack of naturally occurring systems paired with objective ground-truth explanations. Human judgment is an inadequate substitute: an explanation can be \emph{plausible} (convincing to a human) without being \emph{faithful} (an accurate account of the underlying mechanism), and users often prefer the former even when it is incomplete or misleading~\citep{jacovi-goldberg-2020-towards,ancona2019gradient,herman2017promise}.
Further, existing benchmarks typically evaluate desirable \emph{properties} of explanation methods (sparsity, stability, localization, or causal robustness) rather than whether an explanation recovers a \emph{known} valid high-level causal account. For instance, SAEBench \citep{karvonen2025saebenchcomprehensivebenchmarksparse} scores sparse autoencoders on feature disentanglement and reconstruction quality, and RAVEL \citep{huang-etal-2024-ravel} measures how cleanly interventions isolate targeted attributes; both ask whether a method behaves well by some proxy, not whether its output matches a ground-truth mechanism. The distinction matters because a method can score well on stability or sparsity while still recovering the wrong causal structure.\looseness=-1

We address both issues above by constructing a benchmark of idealized complex systems whose high-level explanations are known and analytically justified at the relevant abstraction level. This benchmark enables us to verify two desired properties for a target metric: it should assign near-zero error to correct explanations, and it should detect controlled failures produced by invalid explanations.
A key contribution of this work is the manual alignment of these heterogeneous systems with their high-level explanations integrated within a common framework. The benchmark spans many systems chosen to cover different abstraction types, yet all systems admit clean causal-abstraction representations and, more strongly, \emph{constructive} causal abstractions. This provides a reusable resource and a nontrivial empirical demonstration of the breadth of the causal abstraction program. For each system, we provide the high-level model, low-level implementation, abstraction map, and manually crafted invalid contrastive scenarios. Full details are given in Appendix~\ref{app:systems}; Table~\ref{tab:systems} summarizes the systems.\looseness=-1

\begin{table}[ht]
\centering
\caption{
\textbf{Benchmark systems.}
Each row is a manually aligned ground-truth abstraction pair $(\mathcal{M},\mathcal{E},\tau)$.
The benchmark spans discrete (D) and continuous (C) state spaces, static and dynamical regimes, and local, spatial, and global constructive maps.
\emph{Structure} describes how each macro-observable is formed from the micro-state; for ``aggregation'' and ``observables'', this is performed within $\mathcal{M}$'s measurement, with $\tau$ acting as the identity on the resulting macro-variables.
Rows shaded in light blue are dynamical systems.
For every pair, we also provide invalid contrastive conditions by perturbing the model, the map, or the validity regime; full details are in Appendix~\ref{app:systems}.
}
\label{tab:systems}
\vspace{3pt}
\footnotesize
\setlength{\tabcolsep}{3.2pt}
\renewcommand{\arraystretch}{1.06}
\begin{tabularx}{\linewidth}{@{}lYYC{1.45cm}C{2.25cm}@{}}
\toprule
\textbf{System} &
\textbf{Low level $\mathcal{M}$} &
\textbf{High level $\mathcal{E}$} &
\multicolumn{2}{c}{\textbf{Constructive abstraction map $\tau$}} \\
\cmidrule(lr){4-5}
& & & \textbf{Type} & \textbf{Structure} \\
\midrule

\rowcolor{dynrow}
Gas
& Lennard--Jones particles
& Ideal gas / VdW law
& C $\to$ C
& global / aggregation \\

\rowcolor{dynrow}
Predator--prey
& Agent-based model
& Lotka--Volterra ODE
& D $\to$ C
& global / aggregation \\

\rowcolor{dynrow}
Heat equation 1D
& Brownian particles
& 1D heat equation
& C $\to$ C
& spatial / binning \\

\rowcolor{dynrow}
Heat equation 2D
& Phonon/lattice simulation
& 2D heat equation
& C $\to$ C
& spatial / smoothing \\

\rowcolor{dynrow}
Ising model
& MD/Monte-Carlo simulator
& Nearest-neighbor Ising model
& C $\to$ C
& global / observables \\

Tracr transformer
& Compiled transformer states
& Symbolic sort-rank program
& D $\to$ D
& local / decoding \\

Gene regulatory network
& Multi-valued GRN
& Binary Wg--Fz rule
& D $\to$ D
& local / threshold \\

Logic circuit
& Wire-level Boolean circuit
& Multi-bit gate circuit
& D $\to$ D
& local / functional \\

Transistor circuit
& SPICE/MOSFET voltages
& Boolean gate network
& C $\to$ D
& local / threshold \\

MOS 6502 CPU
& Transistor-level sim.
& Gate-level sim.
& D $\to$ D
& local / functional \\

MOS 6502 CPU
& Transistor-level sim.
& ISA-level sim.
& D $\to$ D
& local / functional \\

MOS 6502 CPU
& Transistor-level sim.
& ISA-level sim.
& D $\to$ D
& local / functional \\

\bottomrule
\end{tabularx}
\end{table}

%% file: sections/05_results.tex
\section{Results}
\label{sec:results}

We evaluate metrics along three dimensions in sequence: classification performance under high sampling, discrimination of structurally invalid abstractions in controlled experiments, and sampling efficiency as a function of model size.

\label{ssec:benchmarking_metrics}
\paragraph{Benchmarking Metrics.}
We begin by asking which metrics can, in principle, separate valid from invalid abstractions when sampling is not a constraint. We evaluate all metrics in the asymptotic regime on every (valid, invalid) abstraction pair in the benchmark, with full results reported in Appendices~\ref{app:valid_scores} and~\ref{app:invalid_scores}. These results motivate two principled thresholds: after normalization, a metric should score at most 0.10 on a valid abstraction and at least 0.15 on an invalid one. Figure~\ref{fig:metric_scores} uses these thresholds to assess the classification performance of each metric across the benchmark.\tiago{A nice figure could be to just have: y-score the quality metric (e.g., CAE) and x-axis 0, 1 (valid/invalid) for all matrics. Or a figure with x-axis: quality metric on valid explanation, y-axis quality metric on invalid explanation + grey identity line. If a method is always above identity, it identifies invalid explanations as worse than valid ones (within a specific setting); if a method is always y-axis larger than largest x-axis, it can fully discriminate good from bad explanations in general (across settings). This might also help understand figure 1?}

\begin{figure}[ht]
    \centering
    \includegraphics[width=\linewidth]{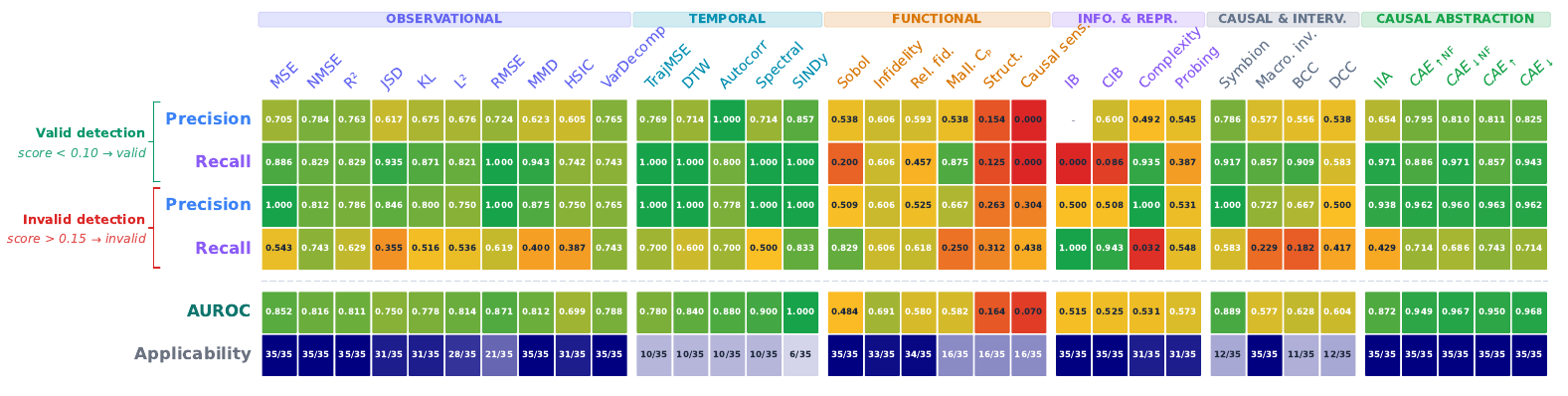}
    \caption{Precision, recall, and AUROC of each metric, computed over all valid and invalid abstractions of the benchmark (higher is better). Applicability denotes the fraction of abstractions for which each metric was able to be evaluated; by design, many metrics are only applicable to a restricted class of systems. Metric normalization, implementation, and hyperparameters are reported in Appendix \ref{app:metric_impl}.}
    \label{fig:metric_scores}
\end{figure}

Coverage (applicability) immediately eliminates a large subset of candidates. Temporal metrics (TrajMSE, DTW, Autocorr, Spectral, SINDy) are defined only for dynamical systems; Symbion requires finite discrete label spaces; $C_p$ requires a linear proxy model with a well-defined noise floor; and structural deviation and the causal sensitivity index fail to produce scores on most benchmark systems.
Among metrics with broad coverage, information-theoretic scores (IB and CIB Lagrangian) and several functional metrics, such as Sobol indices, structural deviation, and causal sensitivity indices, perform poorly on valid abstractions: they are sensitive to distributional details unrelated to causal consistency and assign high error even to valid abstractions. For invalid abstractions, some metrics attain reasonable precision, but at the cost of low recall. This combination of poor calibration and limited applicability eliminates many candidate metrics from further consideration. Conversely, causal abstraction metrics consistently achieve the highest AUROC and maximum coverage.

\label{ssec:discrimination}
\paragraph{Detection of Controlled Failure Cases.}
Classification performance under unlimited sampling does not capture whether a metric can detect the specific structural failures that motivate causal abstraction. We therefore turn to six controlled experiments, each designed to expose a subtle violation that is observationally indistinguishable from the valid condition or manifests only under targeted interventions. The six settings are illustrated in Figure~\ref{fig:controlled_graphs} and described in more detail in Appendix~\ref{app:controlled_exp}.
Figure~\ref{fig:invalid_scores_controlled} then reports, for each experiment and metric, whether a Mann--Whitney $U$ test over 100 runs reveals a statistically significant difference between the valid and invalid condition.

\input{sections/10_control_illustration}

\begin{figure}[ht]
    \centering
    \includegraphics[width=\linewidth]{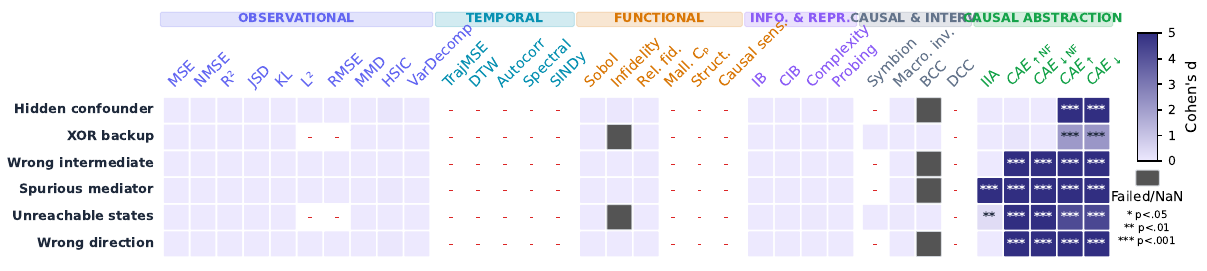}
    \caption{\textbf{Controlled failure modes.} Each cell reports whether a metric significantly separates the valid and invalid abstraction over 100 runs using a Mann--Whitney $U$ test. \cae{} is the only family that detects all six targeted invalidities.}
    \label{fig:invalid_scores_controlled}
\end{figure}

The results show that among the metrics that do apply, none of the observational, functional, or information-theoretic criteria achieve consistent discrimination across the six experiments. This is expected:  each invalid abstraction is observationally indistinguishable from the valid one or fails only under specific interventional conditions, which are precisely the failure modes that causal abstraction was designed to detect. Within the causal abstraction family, results are more nuanced. IIA only succeeds in detecting the violations in Experiments~4 and 5. Both $\cae_\text{NF}$ variants fail in Experiment~1 and 2; only the \cae{} variants successfully detect the violations.

Together, the benchmark results and controlled experiments narrow the candidate set to four metrics. IIA is excluded because, despite broad coverage, it fails to detect the violations in Experiments~3 and 6 where other causal abstraction metrics succeed. The remaining candidates are \caedown{} and \caeup{}, which we carry forward as the primary metrics. Their faithfulness-free counterparts, \caedownnf{} and \caeupnf{}, are retained as ablations to isolate the contribution of faithfulness testing.

\label{ssec:power}
\paragraph{Statistical Power.}
Having narrowed the candidate set to four metrics, we ask how many interventions each requires to reliably detect an invalid abstraction. Convergence results are reported in Appendix~\ref{app:power_curves}. Figure~\ref{fig:power_curves} reports statistical power across selected systems and controlled experiments.

\begin{figure}
    \centering
    \includegraphics[width=\linewidth]{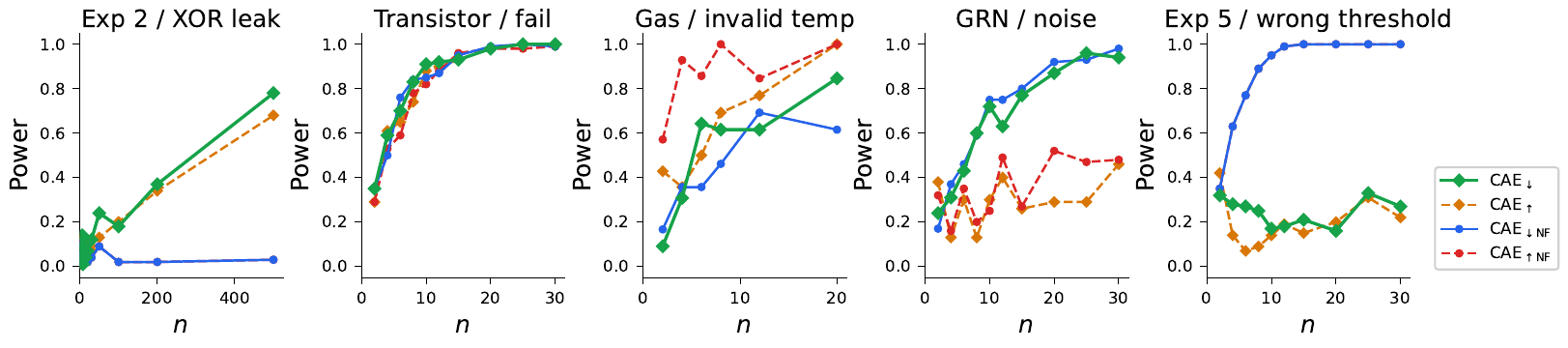}
    \caption{How quickly do metrics detect bad abstractions over good ones? For different numbers of sampled interventions $n$, we compute each metric 100 times on selected valid and invalid abstractions. We apply a Mann--Whitney $U$ test to the two sets of scores (valid and invalid) and report the fraction of runs for which the results differ significantly ($p<0.05$). Note that both NF variants overlap in the leftmost and rightmost plots. Full results are given in Appendix~\ref{app:results}.}
    \label{fig:power_curves}
\end{figure}

Statistical power converges to nearly 100\% within 30 interventions for most systems and all four metrics. Two exceptions stand out. For the gas simulation, power increases more slowly, consistent with the higher variance noted above. Experiment~2 (XOR leak) illustrates the usefulness of faithfulness testing: metrics that include it detect the invalid abstraction as $n$ grows, while their faithfulness-free counterparts consistently fail. The noise condition on the GRN system illustrates the value of top-down sampling: \caedown{} and \caedownnf{} can reliably detect the invalid abstraction after 25--30 interventions, while bottom-up metrics converge more slowly. For Experiment~5 (unreachable intermediate states), the picture reverses: faithfulness-aware metrics converge more slowly and exhibit lower power than plain \caenf{}. \caedownnf{} already probes unreachable states by sampling from the full abstract label domain; faithfulness augmentation additionally injects perturbations into unmapped $\Phi$ variables, which occasionally push $\mathcal{M}$ into the same micro-region as the unreachable states, inflating the valid-condition score and shrinking the discrimination gap. Faithfulness testing therefore does not miss the invalid abstraction here, but reduces specificity on the valid side.


\begin{wrapfigure}{r}{0.42\linewidth}
    \vspace{-0.75em}
    \centering
    \includegraphics[width=\linewidth]{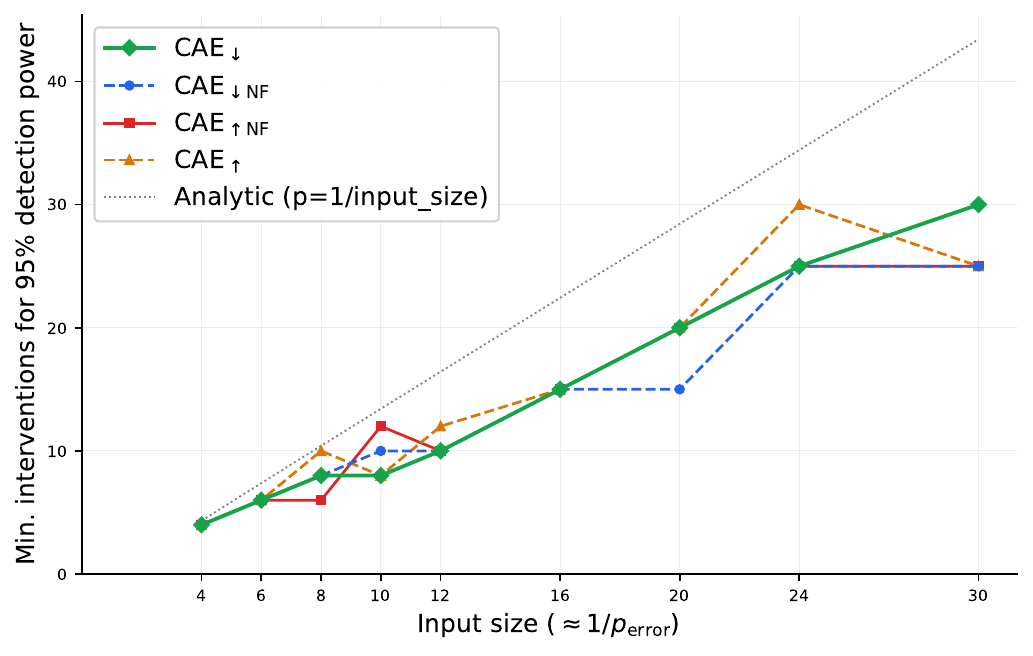}
    \caption{Minimum number of interventions required to reach 95\% detection power as a function of $|\mathcal{\mathcal{V}_\mathcal{E}}|$, measured on Tracr sort-rank programs of increasing sequence length (100 runs per length). 
    See details in Appendix~\ref{app:tracr_scaling}.}
    \label{fig:tracr_scaling}
    \vspace{-2.5em}
\end{wrapfigure} 

\label{ssec:scaling_laws}
\paragraph{Scaling.}
Power curves on fixed systems do not reveal how detection cost scales as the high-level model $\mathcal{E}$ grows. To answer this question, we construct a controlled scaling experiment using Tracr~\citep{lindner2023tracr}, a compiler that translates RASP programs into exact transformer weights. We compile a sort-rank program for each sequence length $\ell \in \{2, 3, 4, 5, 6, 8, 10, 12, 15\}$, yielding transformers of increasing size that compute the rank of each token in the input sequence. The valid high-level model ($\mathcal{E}$) encodes the correct formula $\mathrm{rank}_i = |\{j \neq i : \mathrm{token}_j < \mathrm{token}_i\}|$ for all positions $i$, while the wrong model encodes a specific incorrect hypothesis: the minimum-valued token at position 0 always receives the maximum rank rather than rank 0. This error triggers exactly when $\mathrm{token}_0$ is the minimum element of the sequence, an event with probability $1/\ell$ under uniform sampling of distinct tokens. The expected number of interventions to observe at least one triggering event therefore grows linearly with $\ell$, and analytic calculation gives $n_{95\%} \approx 2.5\ell$ for 95\% power.

Figure~\ref{fig:tracr_scaling} shows the empirically measured required $n$ for \cae{} and \caenf{} as a function of $|\mathcal{E}| = 2\ell$ (token inputs plus rank outputs). All four metrics follow the predicted linear trend closely. This confirms that the sampling budget is governed primarily by the rarity of the triggering condition in the abstract domain, not by the internal complexity of the transformer. In practice, a wrong explanation that diverges from the correct one on a fraction $p$ of inputs requires $O(1/p)$ interventions to detect with fixed confidence: rare but systematic errors in proposed explanations remain detectable at a cost proportional to their rarity, independent of model depth or width.

%% file: sections/10_control_illustration.tex
\tikzset{
  hn/.style  = {circle,draw,thick,minimum size=5mm,font=\scriptsize,inner sep=0.5pt},
  ln/.style  = {circle,draw,minimum size=5mm,font=\scriptsize,inner sep=0.5pt},
  pn/.style  = {circle,draw,dashed,gray,minimum size=5mm,font=\scriptsize,inner sep=0.5pt},
  ae/.style  = {->,gray!70,dashed,shorten >=1.5pt,shorten <=1.5pt},
  iv/.style  = {red,thick},
  rm/.style  = {->,gray!25,densely dashed,thin,shorten >=1.5pt,shorten <=1.5pt},
  lb/.style  = {font=\tiny\itshape},
  tt/.style  = {font=\footnotesize\bfseries},
  nz/.style  = {font=\tiny,text=gray,inner sep=1pt},
  ne/.style  = {->,gray!55,shorten >=1pt,shorten <=1pt},
  nzi/.style = {font=\tiny,text=red!75,inner sep=1pt},
  nei/.style = {->,red!70,shorten >=1pt,shorten <=1pt},
}

\begin{figure*}[ht]
\centering
\resizebox{\textwidth}{!}{%
\begin{tikzpicture}

\begin{scope}[xshift=0mm, yshift=0mm]
  \node[tt] at (13mm,8mm) {(1) Hidden confounder};
  \node[hn] (H1X1) at (0,0)    {$X_1$};
  \node[hn] (H1Y)  at (13mm,0) {$Y$};
  \node[hn] (H1X2) at (26mm,0) {$X_2$};
  \draw[->,thick] (H1X1)--(H1Y);
  \draw[->,thick] (H1X2)--(H1Y);
  \node[lb,left=1pt of H1X1] {$\mathcal{E}$:};
  \node[ln] (L1X1) at (0,-13mm)    {$x_1$};
  \node[ln] (L1Y)  at (13mm,-13mm) {$y$};
  \node[ln] (L1X2) at (26mm,-13mm) {$x_2$};
  \draw[->] (L1X1)--(L1Y);
  \draw[->] (L1X2)--(L1Y);
  \node[lb,left=1pt of L1X1] {$\mathcal{M}$:};
  \draw[ae] (L1X1)--(H1X1);
  \draw[ae] (L1Y)--(H1Y);
  \draw[ae] (L1X2)--(H1X2);
  \node[circle,draw=red,dashed,minimum size=5mm,font=\scriptsize,
        inner sep=0.5pt,text=red] (U1) at (20mm,-6mm) {$r$};
  \draw[->,iv] (U1)--(L1Y);
\end{scope}

\begin{scope}[xshift=38mm, yshift=0mm]
  \node[tt] at (15mm,8mm) {(2) XOR backup path};
  \node[hn] (H2X) at (0,0)    {$X$};
  \node[hn] (H2M) at (13mm,0) {$M$};
  \node[hn] (H2Y) at (30mm,0) {$Y$};
  \draw[->,thick] (H2X)--(H2M);
  \draw[->,thick] (H2M)--(H2Y);
  \node[lb,left=1pt of H2X] {$\mathcal{E}$:};
  \node[ln] (L2X)  at (0,-13mm)    {$x$};
  \node[ln] (L2a)  at (13mm,-13mm) {$m$};
  \node[ln] (L2Y)  at (30mm,-13mm) {$y$};
  \node[pn] (L2b1) at (7mm,-5mm)   {$a$};
  \node[pn] (L2b2) at (7mm,-21mm)  {$b$};
  \draw[->] (L2X)--(L2a);
  \draw[->] (L2a)--(L2Y);
  \draw[->] (L2X)--(L2b1);
  \draw[->] (L2X)--(L2b2);
  \node[lb,left=1pt of L2X] {$\mathcal{M}$:};
  \draw[ae] (L2X)--(H2X);
  \draw[ae] (L2a)--(H2M);
  \draw[ae] (L2Y)--(H2Y);
  \draw[->,iv] (L2b1) to[bend left=12]  (L2Y);
  \draw[->,iv] (L2b2) to[bend right=12] (L2Y);
  \node[lb,iv] at (24mm,-9mm) {$m\!\lor\!(a\!\oplus\!b)$};
\end{scope}

\begin{scope}[xshift=79mm, yshift=0mm]
  \node[tt] at (13mm,8mm) {(3) Wrong intermediate};
  \node[hn] (H3X) at (0,0)    {$X$};
  \node[hn] (H3M) at (13mm,0) {$M$};
  \node[hn] (H3Z) at (26mm,0) {$Y$};
  \draw[->,thick] (H3X)--(H3M);
  \draw[->,thick] (H3M)--(H3Z);
  \node[lb,left=1pt of H3X] {$\mathcal{E}$:};
  \node[lb]    at (13mm, 5mm) {$M\!=\!2X$};
  \node[lb,iv] at (13mm,-5mm) {$M\!=\!3X$};
  \node[ln] (L3X) at (0,-13mm)    {$x$};
  \node[ln] (L3M) at (13mm,-13mm) {$m$};
  \node[ln] (L3Z) at (26mm,-13mm) {$y$};
  \draw[->] (L3X)--(L3M);
  \draw[->] (L3M)--(L3Z);
  \node[lb,left=1pt of L3X] {$\mathcal{M}$:};
  \node[lb,below=2pt of L3M] {$m\!=\!2x$};
  \draw[ae] (L3X)--(H3X);
  \draw[ae] (L3M)--(H3M);
  \draw[ae] (L3Z)--(H3Z);
\end{scope}

\begin{scope}[xshift=121mm, yshift=0mm]
  \node[tt] at (13mm,8mm) {(4) Spurious mediator};
  \node[hn] (H4X) at (0,0)    {$X$};
  \node[hn] (H4Y) at (11mm,0) {$M$};
  \node[hn] (H4Z) at (26mm,0) {$Y$};
  \draw[->,thick] (H4X)--(H4Y);
  \draw[rm] (H4X) to[bend right=22] (H4Z);
  \node[lb,gray] at (13mm,-5mm) {\tiny rm.};
  \draw[->,iv] (H4Y)--(H4Z);
  \node[lb,iv] at (19mm,3mm) {\tiny chain};
  \node[lb,left=1pt of H4X] {$\mathcal{E}$:};
  \node[ln] (L4X) at (0,-13mm)    {$x$};
  \node[ln] (L4Y) at (11mm,-13mm) {$m$};
  \node[ln] (L4Z) at (26mm,-13mm) {$y$};
  \draw[->] (L4X)--(L4Y);
  \draw[->] (L4X) to[bend right=22] (L4Z);
  \node[lb,left=1pt of L4X] {$\mathcal{M}$:};
  \draw[ae] (L4X)--(H4X);
  \draw[ae] (L4Y)--(H4Y);
  \draw[ae] (L4Z)--(H4Z);
\end{scope}

\begin{scope}[xshift=159mm, yshift=0mm]
  \node[tt] at (13mm,8mm) {(5) Unreachable states};
  \node[hn] (H5X) at (0,0)    {$X$};
  \node[hn] (H5M) at (13mm,0) {$M$};
  \node[hn] (H5Z) at (26mm,0) {$Y$};
  \draw[->,thick] (H5X)--(H5M);
  \draw[->,thick] (H5M)--(H5Z);
  \node[lb,left=1pt of H5X] {$\mathcal{E}$:};
  \node[lb]    at (26mm, 5mm) {$M\!\ge\!2$};
  \node[lb,iv] at (26mm,-5mm) {$M\!\ge\!1$};
  \node[ln] (L5X) at (0,-13mm)    {$x$};
  \node[ln] (L5M) at (13mm,-13mm) {$m$};
  \node[ln] (L5Z) at (26mm,-13mm) {$y$};
  \draw[->] (L5X)--(L5M);
  \draw[->] (L5M)--(L5Z);
  \node[lb,left=1pt of L5X] {$\mathcal{M}$:};
  \node[lb,below=2pt of L5M] {\tiny $M\!=\!1$ unreach.};
  \draw[ae] (L5X)--(H5X);
  \draw[ae] (L5M)--(H5M);
  \draw[ae] (L5Z)--(H5Z);
\end{scope}

\begin{scope}[xshift=197mm, yshift=0mm]
  \node[tt] at (13mm,8mm) {(6) Wrong direction};
  \node[hn] (H6X) at (0,0)    {$X$};
  \node[hn] (H6M) at (13mm,0) {$M$};
  \node[hn] (H6Z) at (26mm,0) {$Y$};
  \draw[->,thick] (H6X)--(H6M);
  \draw[rm] (H6M)--(H6Z);
  \node[lb,gray,above] at (19mm,1mm) {\tiny rm.};
  \draw[->,iv] (H6X) to[bend right=28]
      node[lb,iv,below,sloped,pos=0.5]{\tiny fork} (H6Z);
  \node[lb,left=1pt of H6X] {$\mathcal{E}$:};
  \node[ln] (L6X) at (0,-13mm)    {$x$};
  \node[ln] (L6M) at (13mm,-13mm) {$m$};
  \node[ln] (L6Z) at (26mm,-13mm) {$y$};
  \draw[->] (L6X)--(L6M);
  \draw[->] (L6M)--(L6Z);
  \node[lb,left=1pt of L6X] {$\mathcal{M}$:};
  \draw[ae] (L6X)--(H6X);
  \draw[ae] (L6M)--(H6M);
  \draw[ae] (L6Z)--(H6Z);
\end{scope}

\end{tikzpicture}%
}
\caption{%
Causal graphs and abstraction maps for the six controlled experiments. Each panel shows $\mathcal{E}$ (top, uppercase) and $\mathcal{M}$ (bottom, lowercase). Gray dashed arrows are the abstraction map $\tau$. Red marks what changes in the invalid condition; light gray dashed arrows mark edges present in the valid condition that are removed in the invalid condition. Dashed gray circles are $\Phi$ (unmapped) variables; a dashed red circle is a $\Phi$ variable introduced only in the invalid condition. $X$ and $Y$ correspond to inputs and outputs of the models, respectively. Each variable is assumed to be perturbed by exogenous noise $u$, suppressed in the diagrams for visual clarity; refer to Appendix \ref{app:controlled_exp} for details.
}
\label{fig:controlled_graphs}
\end{figure*}

%% file: sections/06_discussion.tex
\section{Discussion}
\label{sec:discussion}

This paper asks when a high-level explanation of a low-level system should be trusted. Our benchmark turns this into an empirical stress test: given valid and invalid abstraction pairs, which metrics separate them? The main result is that observational agreement, distribution matching, representational similarity, compression, symbolic fit, and input-output sensitivity can be useful diagnostics, but not validity criteria.
Across heterogeneous systems, causal-abstraction metrics are the most reliable. However, our results also show that operationalization matters: not all causal metrics succeed and different implementations fail in different ways. The proposed \cae{} metrics, which combine multi-node interventions with built-in faithfulness tests, are the only metrics that pass all tests.
This makes \cae{} a natural objective for explanation discovery. Symbolic, gradient-based, evolutionary, and human-guided search procedures all need a criterion that rewards causally valid abstractions. In particular, future work should study adaptive intervention sampling to quickly detect invalid abstraction, \cae{}-based abstraction discovery, and community extensions of the benchmark. We release the benchmark and all metric implementations on \href{https://github.com/MelouxM/CAE}{GitHub} and \href{https://pypi.org/project/causal-abstraction-eval/}{PyPI} to support this direction.
\subsection{Limitations}
\label{ssec:limitations}


\paragraph{Validity of a given triple vs.\ discovery of good triples.}
\cae{} evaluates a \emph{specified} $(\mathcal{M},\mathcal{E},\tau)$; it does not by itself prevent gerrymandered abstraction maps that are made complex enough to pass, the abstraction analogue of the non-linear representation dilemma~\citep{sutter2025nonlinearrepresentationdilemmacausal,meloux2025dead}.
Three ingredients mitigate but do not eliminate this: constructive (disjoint) coarse-graining restricts $\tau$ to surjective variable partitions; faithfulness testing penalizes maps that silently assume away causal influence; and multi-node interventions probe interaction terms that single-variable checks miss. A complete account of \emph{which} valid abstractions are also \emph{good} explanations (parsimony, level-appropriateness) is beyond a consistency metric and remains future work.

\paragraph{Handling of exogenous noise.} All macro-models in our benchmark are deterministic, and the reported results use $\tau_u$ as the identity. The implementation optionally supports a stochastic high-level model $\mathcal{E}$ that carries an exogenous distribution $P_{\mathcal{U}}$: its noise is marginalized out and consumed by the distributional divergences (JSD, KL, MMD), instantiating the interventional interpretation of \cae{} (Section~\ref{ssec:estimation}). The \emph{counterfactual} regime, with a nontrivial coupling $\tau_u$ that identifies corresponding exogenous realizations across the micro- and macro-levels, is out of scope and left for future work.

\paragraph{Design choices and inductive bias.} The benchmark necessarily involves design choices that can be debated: systems, valid abstractions, invalid contrasts, and sampled interventions. We mitigate this by trying to span diverse domains and abstraction types, but future versions should include community-contributed systems and possibly blind evaluations.


%% file: sections/07_acknowledgements.tex
\section*{Acknowledgements}

This work was partially conducted within French research unit UMR 5217 and was partially supported by CNRS (grant ANR-22-CPJ2-0036-01) and by MIAI@Grenoble-Alpes (grant ANR-19-P3IA-0003). It was granted access to the HPC resources of IDRIS under the allocation 2025-AD011014834 made by GENCI.\\

%% file: appendix/full_results.tex
\section{Additional results}
\label{app:results}

This section contains multiple additional figures for results described in the main body of the work, as well as a new experiment on convergence rates.

\subsection{Validity on Correct Abstractions}
\label{app:valid_scores}

Figure~\ref{fig:valid_scores} shows the scores produced by each baseline metric when applied to the correct abstractions in our benchmark.

\begin{figure}[htbp]
    \centering
    \includegraphics[width=\linewidth]{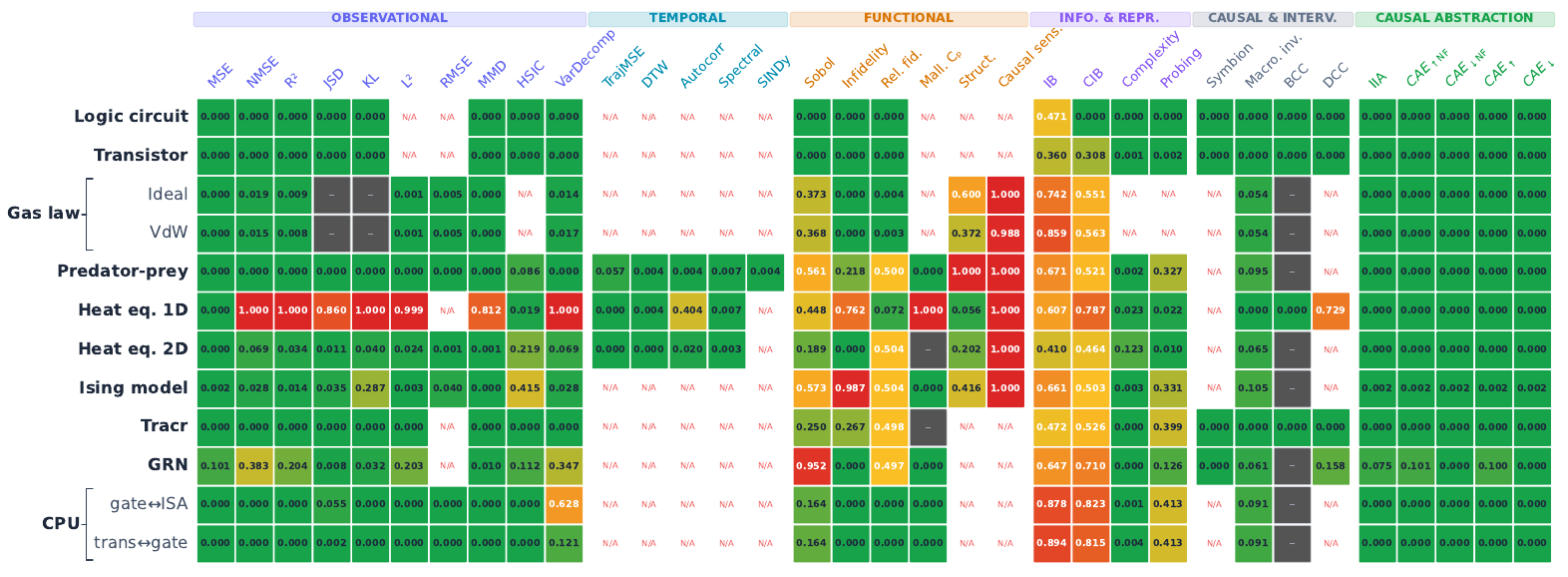}
    \caption{Error scores obtained by baseline metrics when applied to valid abstractions across benchmark systems. Scores are normalized to the $[0, 1]$ range, where $0$ denotes no error and $1$ maximal error. Gray cells denote computation failure or NaN results for the metric.}
    \label{fig:valid_scores}
\end{figure}

Note that most benchmark systems use an identity-style coarse-graining in which every micro-variable is either mapped or declared internal, leaving the unmapped set $\Phi=\emptyset$. For these systems, the faithfulness test has no $\Phi$-variable to perturb and is therefore vacuous, so the faithful (\caeup{}, \caedown{}) and non-faithful (\caeupnf{}, \caedownnf{}) variants coincide up to sampling noise; their separate entries should not be read as independent evidence. Only the logic circuit and the GRN have a non-empty $\Phi$ and thus a substantive faithfulness test.

\subsection{Discrimination Power on Invalid Abstractions}
\label{app:invalid_scores}

We report in Figure~\ref{fig:invalid_scores_full} the results of the discrimination measurement in \ref{ssec:discrimination}, applied to all systems (including controlled experiments) and metrics.

\begin{figure}[htbp]
    \centering
    \includegraphics[width=\linewidth]{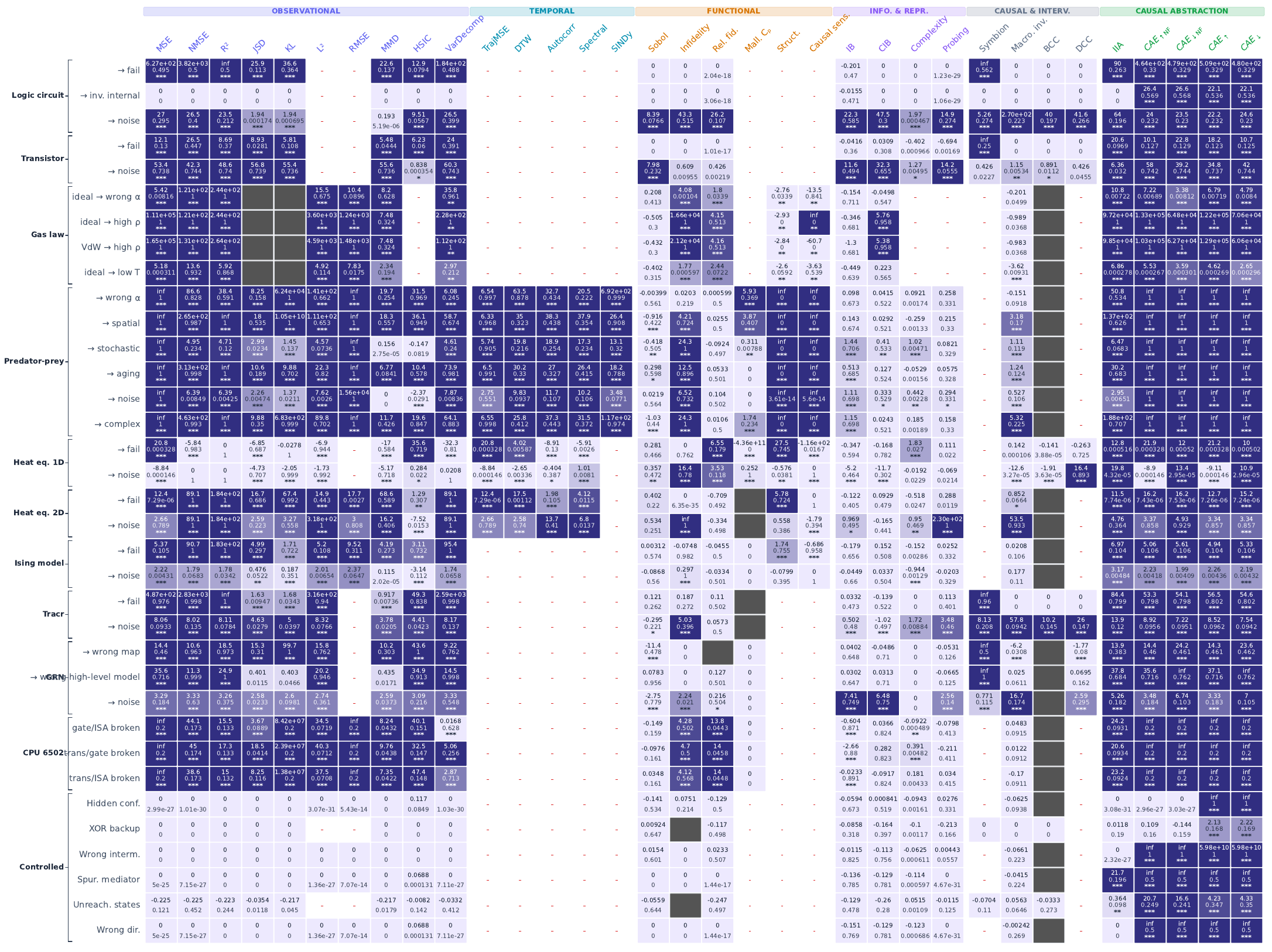}
    \caption{Discriminatory power of baseline metrics when applied to invalid abstractions compared to valid ones. For each cell, we report three values (from top to bottom): Cohen's $d$, computed on the scores obtained by the metric on the invalid vs. valid abstractions; the score obtained on the invalid condition; the significance level, computed on a Mann--Whitney $U$ test applied between the valid and invalid condition scores. Gray cells denote computation failure or NaN results for the metric. Stars represent p-values: p<0.05 (*), p<0.01 (**) and p<0.001 (***).}
    \label{fig:invalid_scores_full}
\end{figure}

\subsection{Power Curves}
\label{app:power_curves}

Figure~\ref{fig:power_curves_full} contains the full results of the power experiments in \ref{ssec:power}, computed across all invalid abstractions for all systems.

\begin{figure}[htbp]
    \centering
    \includegraphics[width=.9\linewidth]{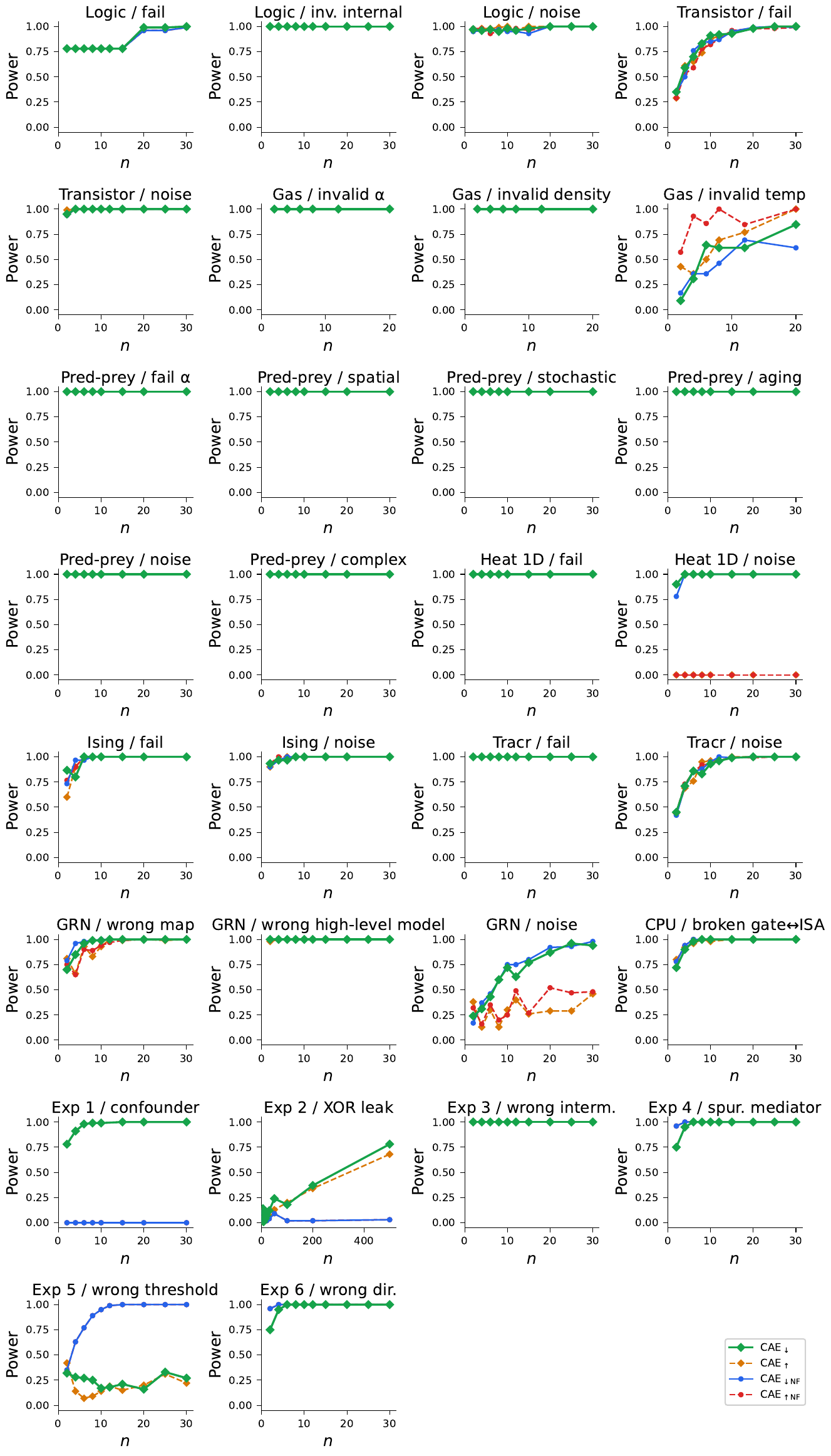}
    \caption{How quickly do metrics detect bad abstractions over good ones? For different numbers of sampled interventions $n$, we compute each metric 100 times on selected valid and invalid abstractions. We apply a Mann--Whitney $U$ test to the two sets of scores (valid and invalid) and report the fraction of runs for which the results differ significantly ($p<0.05$).}
    \label{fig:power_curves_full}
\end{figure}

Figure~\ref{fig:power_convergence_full} shows how the score estimates produced by each causal metric stabilize for different numbers of sampled interventions.

\begin{figure}[htbp]
    \centering
    \includegraphics[width=.9\linewidth]{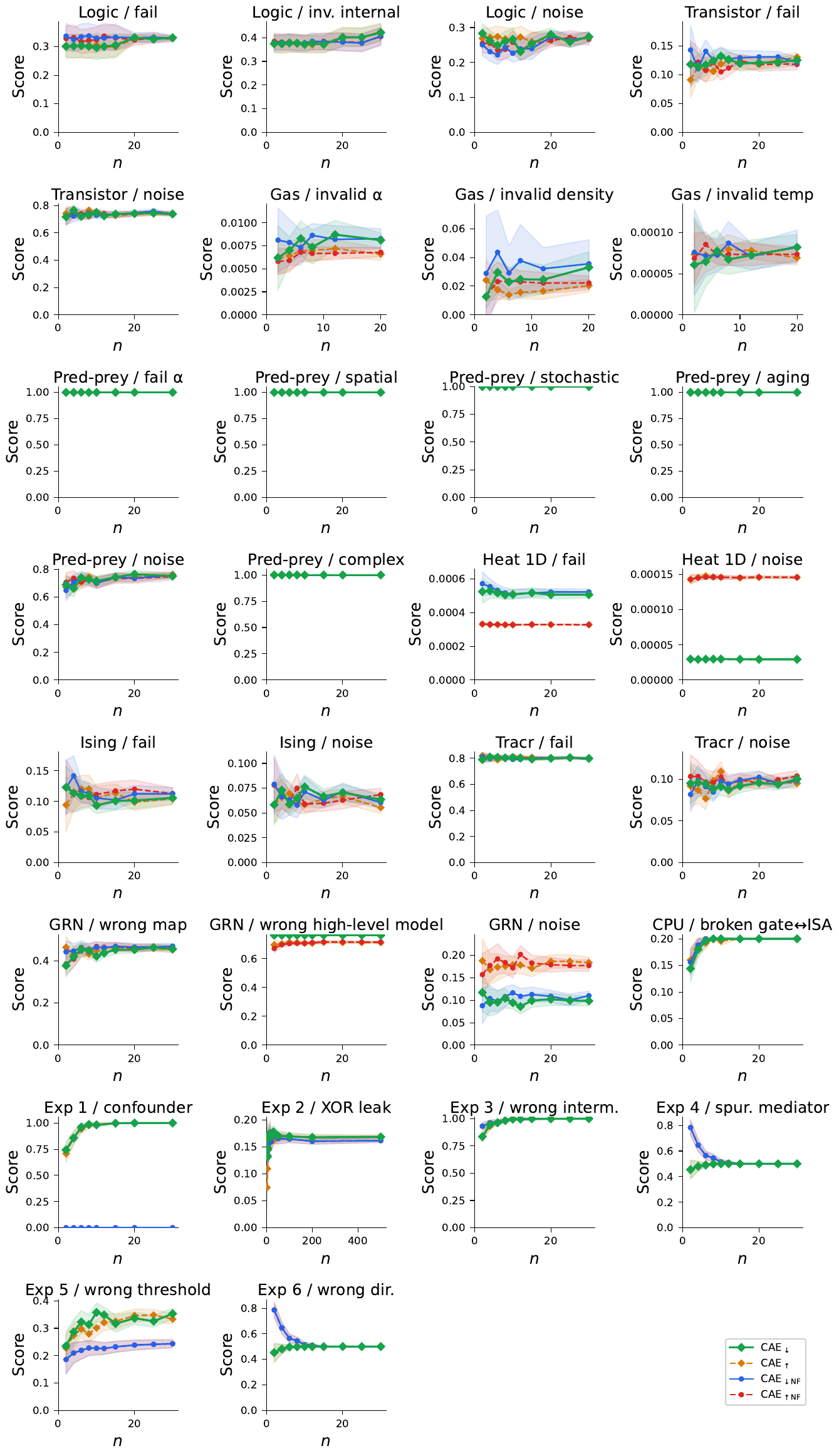}
    \caption{Convergence rate: For different numbers of sampled interventions $n$, we compute each metric 100 times on selected invalid abstractions.}
    \label{fig:power_convergence_full}
\end{figure}

Estimates stabilize quickly across most systems: 30 sampled interventions are generally sufficient to obtain a stable score, with the gas simulation generally requiring more samples due to the higher variance of the molecular dynamics estimator. No metric stands out as substantially slower to converge than the others. However, differences between sampling directions emerge. For the GRN with a reversed causal rule, top-down metrics (\caedown{} and \caedownnf{}), which intervene directly on the abstract states that expose the mis-specification, score higher than their bottom-up counterparts, which only draw micro-states reachable from the input distribution. For the GRN noise condition, bottom-up metrics score higher than top-down ones on valid abstractions: sampling generates interventions near the boundaries of the value-map subspaces, and noisy outputs of $\mathcal{M}$ are more likely to be misclassified after abstraction. Top-down sampling stays in the interior of each subspace, reducing the noise. The separation is notably strong in Experiment~1 (hidden confounder): \caenf{} variants score near zero, while faithfulness-augmented variants detect the invalid abstraction (score near one). This demonstrates that faithfulness testing is critical, independently of the sampling direction.

\subsection{Tracr power curves}
\label{app:tracr_scaling}

We provide here extended results for the Tracr sort-rank scaling experiment.

Figure~\ref{fig:tracr_power_curves_full} displays the full detection power curves across different sequence lengths. Figure~\ref{fig:tracr_scaling_heatmap} provides the minimal sample counts required to achieve a $95\%$ detection threshold.

\begin{figure}[htbp]
    \centering
    \includegraphics[width=\linewidth]{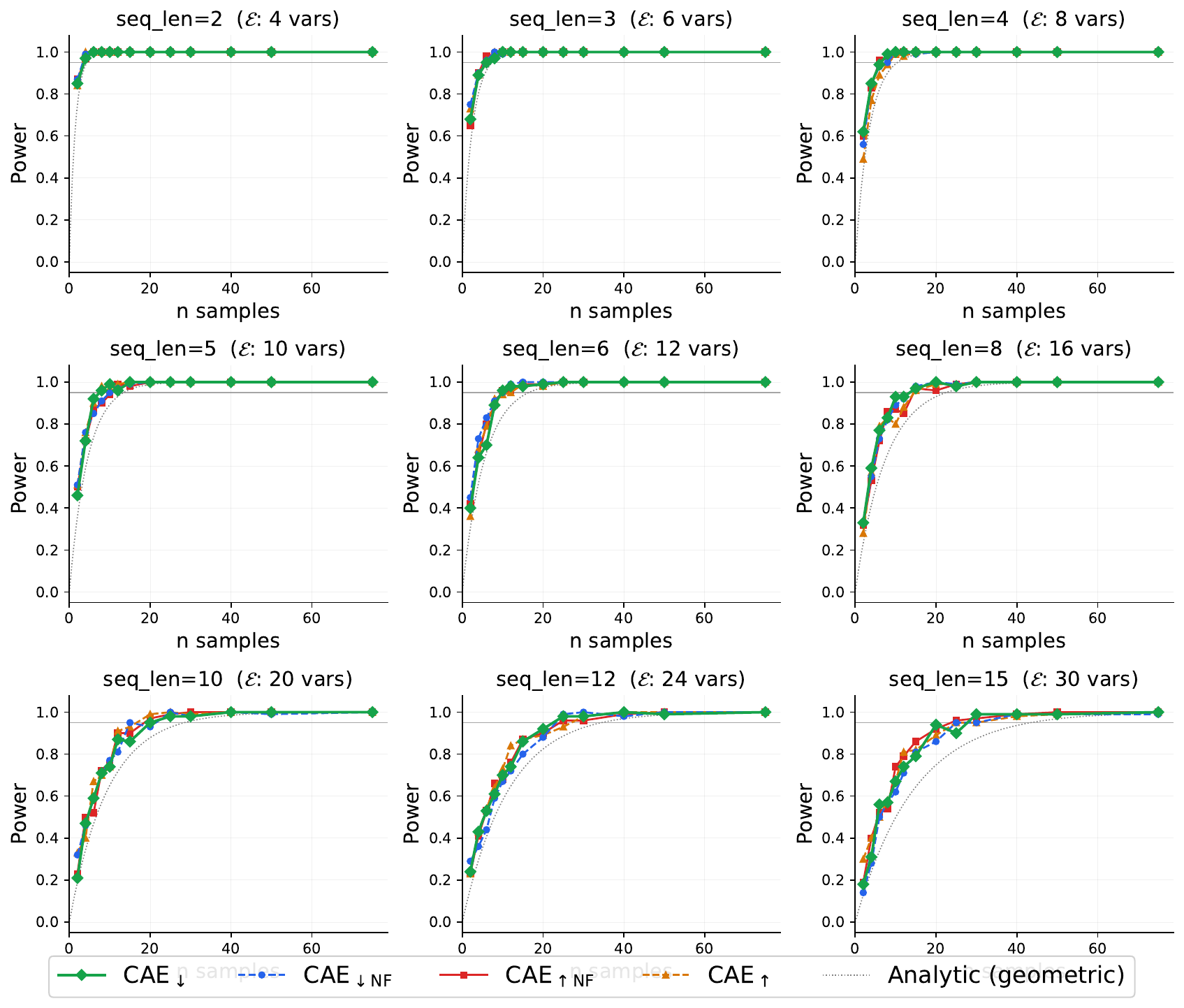}
    \caption{Detection power vs.\ the number of sampled interventions $n$ across different sequence lengths for the Tracr sort-rank programs. The curves show the empirical probability of successfully detecting the invalid condition (where $\text{rank}_0 = \ell-1$ when $\text{token}_0$ is the minimum).}
    \label{fig:tracr_power_curves_full}
\end{figure}

\begin{figure}[htbp]
    \centering
    \includegraphics[width=0.8\linewidth]{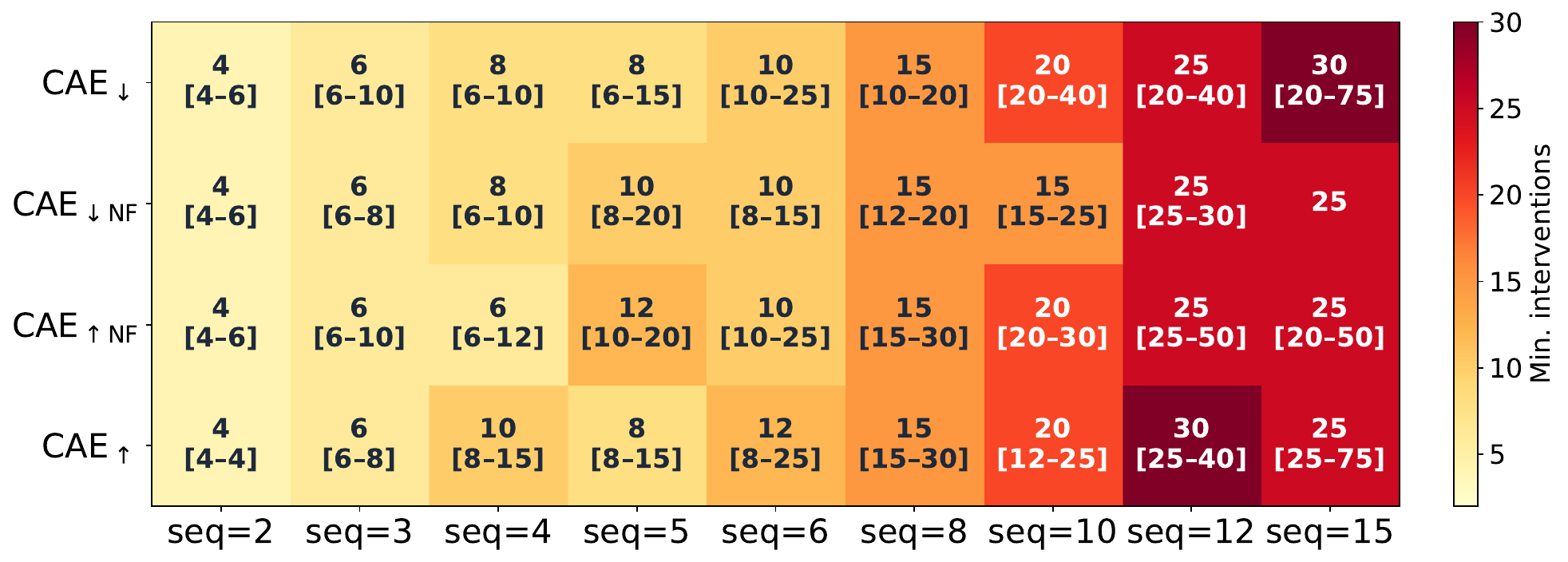}
    \caption{Minimum number of sampled interventions required to achieve 95\% detection power on the Tracr sort-rank experiment, evaluated across different metrics (rows) and sequence lengths (columns). Numbers between brackets represent Wilson confidence intervals.}
    \label{fig:tracr_scaling_heatmap}
\end{figure}

%% file: appendix/extended_related_work.tex
\section{Extended Background: Metrics of Explanation Validity}
\label{app:extended_background}

We survey the principal classes of metrics used to evaluate whether a computational explanation
$\mathcal{E}$ is a valid account of a target system $\mathcal{M}$.  Throughout, $u \sim P_{\mathcal{U}}$
denotes an exogenous input drawn from the natural input distribution, and validity is assessed
relative to the observable or internal behavior of $\mathcal{M}$.  We organize criteria by
the type of evidence they require: observational, functional, information-theoretic, or
causal.

\subsection{Observational Validity}
\label{sec:observational}

The most elementary criterion for explanation validity is \emph{observational equivalence}:
$\mathcal{E}$ is valid insofar as it reproduces the measurable outputs of $\mathcal{M}$ under the
natural input distribution.

\paragraph{Pointwise reconstruction accuracy.}
In deterministic or regression settings, observational validity is operationalized as low expected
point-to-point discrepancy between the outputs of $\mathcal{E}$ and $\mathcal{M}$.  Standard metrics
include the \emph{mean squared error}
$\mathrm{MSE} = \mathbb{E}_{u}[\|\mathcal{M}(u)-\mathcal{E}(u)\|^2]$,
the \emph{root mean squared error} $\mathrm{RMSE}$, the \emph{$\ell_2$ distance} $L_2$, and the
\emph{normalized} variant $\mathrm{NMSE}$, which removes
the dependence on output scale. The coefficient of determination $R^2$ provides an interpretable measure of the fraction of variance explained.  These criteria underlie
classical system identification~\citep{ljung1999} and general-purpose symbolic
regression~\citep{koza1994genetic,schmidt2009distilling,petersen2019deep,NEURIPS2022_dbca58f3},
where model selection is performed by minimizing reconstruction error on held-out data.

\paragraph{Complexity-regularized accuracy.}
Accuracy-only criteria permit arbitrarily complex explanations that overfit the observed data.
Standard practice augments the reconstruction objective with a complexity penalty $\Omega(\mathcal{E})$,
yielding criteria such as AIC~\citep{akaike2003new}, BIC~\citep{schwarz1978estimating},
and minimum description length~\citep{Mangan_2017,10.1145/3735634}.
\emph{Mallows' $C_p$} statistic~\citep{mallows1973,VALLEJO201448,https://doi.org/10.1155/2024/7741473,dst2009}
provides a related bias--variance decomposition, penalizing the number of free parameters:
$C_p = \mathrm{RSS}/\hat{\sigma}^2 - n + 2p$, where $\mathrm{RSS}$ is the residual sum of
squares, $p$ the number of parameters, and $\hat{\sigma}^2$ an estimate of noise variance.
In our setting, $C_p$ is applied to a linear proxy fitted between the predictions of $\mathcal{E}$ and the outputs of $\mathcal{M}$ so that it simultaneously penalizes residual mismatch and model complexity in a cross-model comparison: p is set to twice the number of variables in $\mathcal{E}$ (one slope and one intercept per variable), and $\sigma^2$ is estimated from the residuals of this proxy.
Physics-inspired approaches embed domain-specific inductive biases, such as dimensional homogeneity,
separability, known invariances, directly into the admissible hypothesis class, as in
AI~Feynman~\citep{udrescu2020aifeynmanphysicsinspiredmethod} and related
frameworks~\citep{Tenachi_2023,10.1145/3712255.3734309}. We also employ a \emph{SINDy-style evaluator}~\citep{brunton_discovering_2016}, which scores an explanation by how well its governing equations predict the empirical trajectory one epoch ahead. High residual indicates that the governing equations of $\mathcal{E}$ do not faithfully reproduce the system's epoch-level dynamics.

\paragraph{Distributional equivalence.}
When $\mathcal{M}$ is stochastic or the output of interest is a distributional quantity,
pointwise comparison is insufficient.  Observational validity is then defined by proximity
between the output marginals $P_{\mathcal{E}(u)}$ and $P_{\mathcal{M}(u)}$.  The
\emph{KL divergence} $D_{\mathrm{KL}}(P_\mathcal{M} \| P_\mathcal{E})$ and the symmetric
\emph{Jensen--Shannon divergence} $D_{\mathrm{JS}}$ measure discrepancy in a model-based sense.
The \emph{maximum mean discrepancy}
$\mathrm{MMD}^2(\mathcal{M},\mathcal{E}) = \|\mu_{P_\mathcal{M}} - \mu_{P_\mathcal{E}}\|^2_\mathcal{H}$,
where $\mu_P$ is the kernel mean embedding of $P$ in a reproducing kernel Hilbert space
$\mathcal{H}$~\citep{JMLR:v13:gretton12a}, provides a nonparametric alternative.
The \emph{Hilbert--Schmidt independence criterion} (HSIC)~\citep{gretton05} evaluates statistical dependence
between residuals $\mathcal{M}(u) - \mathcal{E}(u)$ and the input $u$; a low HSIC indicates that
errors are independent of the input, reflecting global rather than locally compensating agreement.
These criteria subsume the \emph{docking} paradigm in agent-based
modeling~\citep{Axtell1996,collins2024}, where models are accepted as equivalent when they
reproduce the same aggregate distributional patterns.

\paragraph{Temporal fidelity.}
For dynamical systems only, observational validity must be assessed over entire trajectories rather
than marginal snapshots.  \emph{Trajectory MSE} measures the average squared deviation between
state sequences $\{x_t^\mathcal{M}\}$ and $\{x_t^\mathcal{E}\}$ over a common time horizon.
\emph{Dynamic time warping} (DTW)~\citep{sakoe2003dynamic} generalizes this to temporally misaligned trajectories
by computing the minimum-cost elastic alignment:
$\mathrm{DTW}(\mathbf{x}^\mathcal{M}, \mathbf{x}^\mathcal{E}) = \min_{\pi} \sum_{(i,j)\in\pi} d(x_i^\mathcal{M}, x_j^\mathcal{E})$,
where $\pi$ ranges over admissible warping paths.  \emph{Temporal autocorrelation matching}
assesses whether $\mathcal{E}$ reproduces the autocorrelation structure
$\rho^\mathcal{M}(\tau) = \mathbb{E}[x_t x_{t+\tau}]$ of $\mathcal{M}$ across lags $\tau$;
failure indicates that $\mathcal{E}$ does not capture the system's memory structure.
\emph{Spectral analysis} compares the power spectral densities obtained via the Fourier transform of the respective autocorrelation functions~\citep{percival1993spectral}; agreement in dominant frequencies, harmonic structure, and spectral entropy indicates that the explanation captures the characteristic temporal modes of $\mathcal{M}$.

\paragraph{Limitation: equifinality.}
All observational criteria share a fundamental limitation: they are insensitive to the causal
mechanisms generating the data.  Distinct explanations may induce indistinguishable output
distributions, a phenomenon termed \emph{equifinality}~\citep{pmlr-v185-valogianni22a,collins2024}.
Observationally valid models can violate the causal structure of $\mathcal{M}$ while remaining
empirically accurate, leading to explanations that are unstable under distribution
shift~\citep{Ghorbani_Abid_Zou_2019,Kindermans2019,pmlr-v260-zhang25a} and
fragile to mechanistic perturbations~\citep{meloux2025mechanistic,meloux2025dead,katende2025causaloperatordiscoverypartial}.

\subsection{Functional Validity}
\label{sec:functional}

\emph{Functional validity} strengthens observational equivalence by requiring that $\mathcal{E}$
reproduce not only the outputs of $\mathcal{M}$ but also its \emph{input--output response profile}:
how outputs vary as inputs are systematically modified.  These criteria are most naturally applied
to static or quasi-static input--output systems, though several generalizations to dynamical
settings exist.

\paragraph{Variance decomposition and global sensitivity.}
\emph{Variance decomposition} decomposes the total output variance $\mathrm{Var}[\mathcal{M}(u)]$
into contributions attributable to individual inputs and their interactions via the ANOVA
expansion~\citep{efron1981jackknife,saltelli1999quantitative}.  \emph{Sobol sensitivity indices}~\citep{sobol2001global} formalize this as
$S_i = \mathrm{Var}[\mathbb{E}[\mathcal{M}(u)\mid u_i]] / \mathrm{Var}[\mathcal{M}(u)]$
(first-order) and analogously for higher-order terms (interactions between input coordinates). Under this validity criterion, $\mathcal{E}$
is deemed valid if its Sobol indices $\hat{S}_i$ agree with those of $\mathcal{M}$ up to an
acceptable tolerance, ensuring global functional coherence in how uncertainty in each input
propagates to the output.  Related global sensitivity methods including
FAST~\citep{cukier1973study,saltelli1999quantitative} and Morris
screening~\citep{morris1991factorial} provide computationally cheaper approximations to the same
underlying sensitivity structure.

\paragraph{Local attribution fidelity.}
A complementary notion evaluates agreement between $\mathcal{E}$ and $\mathcal{M}$ in a
neighborhood of a fixed input $u_0$.  The \emph{infidelity} metric~\cite{yeh2019fidelity}
is defined as
\[
  \mathrm{INFD}(\mathcal{E}, u_0)
  = \mathbb{E}_{\delta u}\!\left[
      \bigl(\delta u^\top \phi(\mathcal{E}, u_0)
      - [\mathcal{M}(u_0) - \mathcal{M}(u_0 - \delta u)]\bigr)^2
    \right],
\]
where $\phi(\mathcal{E}, u_0)$ is the attribution vector of $\mathcal{E}$ at $u_0$ and $\delta u$
is drawn from a perturbation distribution.  Low infidelity indicates that the explanation's
attributions accurately predict the actual output changes of $\mathcal{M}$ under local
perturbations. We adapt infidelity to compare output sensitivities between $\mathcal{E}$ and $\mathcal{M}$ directly, without requiring attribution vectors.  LIME~\cite{Ribeiro_lime} constructs a locally faithful surrogate by minimizing a
locally weighted reconstruction error around $u_0$, and SHAP~\cite{NIPS2017_8a20a862} defines
attributions via Shapley values, which satisfy a consistent axiomatic decomposition of the output
into per-feature contributions; both can be interpreted as proxies for local attribution fidelity.

\paragraph{Structural deviation and causal sensitivity index.}
While standard sensitivity indices quantify statistical influence under passive sampling, a complementary family of metrics evaluates the sensitivity of explanatory validity to the structural parameters of $\mathcal{E}$ itself~\citep{katende2025causaloperatordiscoverypartial}. The \emph{structural deviation} metric perturbs each parameter of $\mathcal{E}$ by a small amount and measures the resulting change in alignment score, identifying which parameters are most important for the explanation's validity.
The \emph{causal sensitivity index} applies a harder test: each parameter is zeroed out in turn, and the drop in alignment score is recorded, providing a global decomposition of validity across the components of $\mathcal{E}$.
Together, these metrics support model diagnosis by distinguishing robust explanatory components from those whose perturbation or removal would invalidate the abstraction.

\paragraph{Counterfactual and relational fidelity.}
\emph{Counterfactual validity}~\citep{DBLP:journals/corr/abs-1711-00399} requires that $\mathcal{E}$
reproduce the minimal perturbations $\delta u$ that change the class label or output regime of
$\mathcal{M}$: the decision boundaries of $\mathcal{E}$ must coincide with those of $\mathcal{M}$ in
the relevant neighborhood.
\emph{Relational fidelity}~\citep{collins2024} measures the Pearson correlation between $\Delta_\mathcal{E}(u, u') = \mathcal{E}(u') - \mathcal{E}(u)$ and $\Delta_\mathcal{M}(u, u') = \tau_{\mathcal{Y}}(\mathcal{M}(u')) - \tau_{\mathcal{Y}}(\mathcal{M}(u))$ over sampled pairs $(u, u')$, computed independently per output dimension and averaged. A score of 0 (after normalization) indicates perfect agreement in relational structure.

\paragraph{Limitation: structural underdetermination.}
Functional criteria remain agnostic to internal mechanisms.
Distinct models can exhibit identical global sensitivity profiles, local attributions, and
counterfactual boundaries while relying on entirely different underlying structures. Functional
validity therefore does not uniquely identify the causal organization of $\mathcal{M}$, and
multiple mechanistically incompatible explanations may be simultaneously admissible.

\subsection{Information-Theoretic and Representational Validity}
\label{sec:infotheoretic}

Another class of criteria requires that $\mathcal{E}$ reproduce not only the
input--output behavior of $\mathcal{M}$ but also the structure of its internal
representations: how information is encoded, compressed, and transmitted across the system.

\paragraph{Representational alignment and probing accuracy.}
A first approach tests whether the abstract variables posited by $\mathcal{E}$ are linearly
decodable from the internal states of $\mathcal{M}$. In the case of neural networks, given a representation
$h(u) \in \mathbb{R}^d$ extracted from a nominated layer of $\mathcal{M}$, \emph{probing
accuracy} measures the performance of a linear classifier or regressor trained to predict an
explanatory variable from $h(u)$~\citep{alain2016understanding}. For real neural networks, neuroscience commonly refers to these practices as \textit{multivariate pattern analysis}~\citep{haxby2001distributed,norman2006beyond} or \textit{brain signatures}~\citep{wager2013fmri,kragel2018representation}. High probing
accuracy indicates that the feature is explicitly represented in $\mathcal{M}$'s internal geometry~\citep{ravichander-etal-2021-probing,belinkov-2022-probing}.
For systems with continuous identity value maps, where all samples share a single abstract label, probing accuracy is trivially maximal and reflects the smoothness of the micro-model's representation rather than the discriminality of abstract categories.
Supervised concept-level probing is instantiated by Concept Activation
Vectors~\citep{kim2018interpretabilityfeatureattributionquantitative}; unsupervised variants
based on sparse autoencoders (SAEs)~\citep{cunningham2023sparseautoencodershighlyinterpretable,bricken2023monosemanticity}
recover a dictionary of monosemantic features without requiring labeled concepts.
Representational similarity analysis (RSA)~\citep{kriegeskorte2008representational} generalizes
this by comparing representational dissimilarity matrices between $\mathcal{E}$ and $\mathcal{M}$~\citep{schrimpf2021neural},
without committing to a specific linear structure~\citep{aw2024instructiontuningalignsllmshuman,ryskina2025language}.

\paragraph{Information bottleneck and causal information bottleneck Lagrangians.}
The \emph{information bottleneck} (IB) framework characterizes valid representations as those
achieving an optimal trade-off between compression of the input and preservation of information relevant to the output~\citep{tishby2000informationbottleneckmethod}.  
However, standard IB relies on
observational mutual information, rendering it sensitive to spurious correlations in the input
distribution.  The \emph{causal information bottleneck} (CIB)~\citep{simoes2025causalinformationbottleneckoptimal}
strengthens this by replacing observational relevance with causal relevance measured under
interventions over inputs, so that $\mathcal{E}$ is assessed by its capacity to preserve causally invariant predictive information.
Operationally, the IB Lagrangian is evaluated using the observational mutual information $I(\mathcal{E}(u); \mathcal{M}(u))$, whereas the CIB Lagrangian replaces this with the interventional relevance $H(Y) - H_c(Y \mid \mathrm{do}(T))$; this distinction is relevant when the input distribution contains spurious correlations that increase the apparent observational relevance of $\mathcal{E}$.

\paragraph{Complexity shift.}
\emph{Algorithmic information dynamics} (AID)~\citep{ZENIL20191160,Zenil:2020,Zenil_Kiani_Tegnér_2023}
evaluates explanations in terms of changes in algorithmic complexity induced by interventions.
The \emph{complexity shift} metric is defined as the discrepancy between the change in Kolmogorov
complexity $K$ of the output under perturbation of $\mathcal{M}$ versus the corresponding change in
$\mathcal{E}$:
\[
  \Delta K_\mathcal{M}(u, \delta u) = K(\mathcal{M}(u+\delta u)) - K(\mathcal{M}(u)), \qquad
  \mathrm{CS}(\mathcal{E}) = \mathbb{E}\!\left[\bigl|\Delta K_\mathcal{M} - \Delta K_\mathcal{E}\bigr|\right].
\]
Low complexity shift indicates that $\mathcal{E}$ produces mechanistically comparable
transformations under perturbation.  In practice, Kolmogorov complexity is approximated via
compression algorithms or block decomposition methods~\citep{ZENIL20191160}, introducing
estimation variance.

\paragraph{Limitation: mechanistic underdetermination.}
Information-theoretic criteria capture \emph{what information} is present and how it is
organized across variables, but not \emph{how} that information is causally transformed by the
system's dynamics.  Distinct mechanisms can yield identical probing accuracies, identical IB
optima, and identical complexity-shift profiles~\citep{meloux2025dead}.  
Moreover, most of these criteria, except in
their explicitly causal variants, are grounded in observational regimes~\citep{exp_unders}.

\subsection{Causal and Interventional Validity}
\label{sec:causal}

The strictest class of criteria requires that $\mathcal{E}$ correctly reproduce the behavior of
$\mathcal{M}$ under \emph{active interventions}. Informally, let $\mathcal{I}$ be a class of interventions
on $\mathcal{M}$, and let $\mathcal{M}^{(i)}$ and $\mathcal{E}^{(i)}$ denote the respective
post-intervention systems for $i \in \mathcal{I}$.  Then $\mathcal{E}$ is \emph{interventionally
valid} relative to $\mathcal{I}$ if
$ P_{\mathcal{M}^{(i)}}(Y) \;\approx\; P_{\mathcal{E}^{(i)}}(\hat{Y}), \: \forall\, i \in \mathcal{I}, $
where $Y$ and $\hat{Y}$ are corresponding output variables under a fixed variable correspondence.
Unlike observational criteria, this requirement is closed under the causal operator: agreement
must hold across a family of intervention regimes.

\paragraph{Behavioral causal consistency and macroscopic invariance.}
A prerequisite for interventional validity is that the abstraction map $\tau$ must be well-defined with respect to the dynamics: distinct micro-states sharing the same abstract label must produce the same abstract output under intervention, regardless of the micro-realization.
In the Neuro-Cognitive Multilevel Causal Modeling framework~\citep{nc-mcm}, this is formalized as \emph{behavioral causal consistency} (BCC): for every pair $x_1, x_2 \in \tau^{-1}(c), P(B[t] \mid \mathrm{do}(X[t] = x_1)) = P(B[t] \mid \mathrm{do}(X[t] = x_2))$. The same requirement appears independently in statistical physics under the name \emph{macroscopic invariance}: a coarse-graining is admissible only when all micro-configurations within a macro-state are observationally equivalent at the macro-level~\citep{RevModPhys.70.653}.
BCC and macroscopic invariance differ in scope: macroscopic invariance tests each variable in isolation, intervening on a single variable while leaving others unspecified, whereas BCC tests each variable within a complete system state drawn from the joint distribution. Operationally, these are assessed by sampling pairs of micro-states with the same abstract label and verifying that their abstracted outputs of $\mathcal{M}$ agree under intervention.

\paragraph{Dynamic causal consistency.}
The \emph{dynamic causal consistency} (DCC) criterion, introduced in the Neural Causal Model
framework for dynamical systems~\citep{nc-mcm}, requires that interventions on $\mathcal{E}$ correctly predict the
next-step internal states of $\mathcal{M}$.  Formally, DCC is satisfied if
$\mathcal{E}^{(i)}(u_t) \approx h_\mathcal{M}(u_{t+1})$ for all interventions $i \in \mathcal{I}$
and time steps $t$, where $h_\mathcal{M}(\cdot)$ denotes the corresponding internal state of
$\mathcal{M}$.  DCC thus enforces step-wise causal coherence between the explanation's dynamics
and those of the target system.  It is a dynamical extension of behavioral causal consistency (BCC): where BCC requires that same-label micro-states produce identical behavioral outputs under intervention, DCC additionally requires that micro-states sharing the same abstract label at time $t$ must also do so at $t+1$ under the system dynamics. We evaluate this using exact label matching on the abstracted next state.

\paragraph{Constructive and formal verification.}
The strongest guarantees arise in settings where $\mathcal{E}$ can be analytically derived from
or formally verified against $\mathcal{M}$.  In physics, renormalization group methods and
effective field theory~\citep{RevModPhys.70.653,georgi_effective_1993} construct coarse-grained
explanations that provably preserve long-distance or low-energy observables. In computer science, formal
verification~\citep{10.1145/1592434.1592436} establishes that an implementation $\mathcal{M}$
satisfies a high-level specification $\mathcal{E}$ for \emph{all} inputs, yielding universal behavioral
equivalence guarantees.  \emph{Symbion}~\citep{gritti2019symbion} provides a useful source of inspiration through its use of agreement between symbolic and concrete execution as a consistency check. Adapting this idea to the explanation setting, we treat the explanatory model $\mathcal{E}$ as an abstract representation of the target model $\mathcal{M}$ and measure validity by the extent to which their behaviors agree. In our evaluation, we implement a discrete analog suited to finite abstract domains: we exhaustively enumerate all combinations of abstract input labels, execute both models, and measure the fraction of input combinations for which they disagree on any output.
In causal agent-based modeling~\citep{pmlr-v185-valogianni22a}, explanations are constructed to
satisfy domain-theoretic structural constraints.

\paragraph{Mechanistic interpretability.}
Developed primarily in AI, mechanistic interpretability (MI) aims to reverse-engineer networks into human-interpretable algorithms \citep{olah2020zoom}. MI assumes that for a given behavior of interest, a sparse subset of the network executes the relevant algorithm. This subset, often referred to as a \textit{circuit}, is a key research target of modern AI interpretability methods \cite{NEURIPS2020_92650b2e, meng2022locating, monea2024glitchmatrixlocatingdetecting, kramár2024atpefficientscalablemethod, NEURIPS2023_34e1dbe9, geva-etal-2023-dissecting, syed2023attributionpatchingoutperformsautomated}. It is loosely inspired by neuroscience, which also seeks to uncover neural circuits underlying observed behaviors \cite{yuste2008circuit}. A circuit is valid if it reproduces the network output under interventions on the computation, whether they are part or not of the circuit~\citep{hanna2024have}. \emph{Interchange intervention accuracy} (IIA)~\citep{pmlr-v162-geiger22a} generalizes this perspective: it requires that pairwise interchange interventions on the network's internal representations and the high-level model produce consistent outputs. Unlike IC \citep{zennaro_quantifying_2023} and its variants, IIA evaluates interventional consistency without requiring a fully specified abstraction map over variables.

\paragraph{Computational mechanics and $\epsilon$-machines.}
A more general perspective is provided by computational mechanics, in which valid explanations
correspond to \emph{causally closed} macro-level representations.  The $\epsilon$-machine
framework~\citep{PhysRevLett.63.105,shalizi_computational_2001} defines the canonical valid
explanation as the minimal sufficient statistic for predicting $\mathcal{M}$: the set of
\emph{causal states} $\mathcal{S}$ such that future behavior is conditionally independent of the
past given $\mathcal{S}$.  An explanation is valid under this criterion if its state space is
isomorphic to the $\epsilon$-machine of $\mathcal{M}$, ensuring that no micro-level information
would improve predictive accuracy.  Related perspectives interpret valid explanations as
\emph{programs implemented} by the underlying system, whose macro-level dynamics are
self-contained and interventionally sufficient~\citep{rosas2024softwarenaturalworldcomputational}.

\paragraph{Limitations}
While causal and interventional criteria provide the strongest notion of validity, existing
formulations face practical limitations in complex, stochastic settings.  Constructive approaches
require strong domain-specific assumptions. Mechanistic interpretability operates at the wrong level of abstraction, viewing low-level components as variables of the high-level explanation and, thus, suffers from causal overdetermination~\citep{mcgrath2023hydraeffectemergentselfrepair,meloux2025dead}. Finally, computational mechanics and $\epsilon$-machines are subsumed by the more general framework of causal abstraction.

%% file: appendix/controlled_experiments.tex
\section{Controlled Experiments}
\label{app:controlled_exp}

We describe below the six controlled experiments from Figure~\ref{fig:controlled_graphs}. These highlight subtle ways in which abstraction can fail: aside from the strongest causal abstraction metrics, baseline metrics generally fail to detect the errors in these claimed abstractions.

Each panel in this appendix shows the full structural model, including the exogenous noise variables $u_{\bullet}$. Every endogenous variable carries its own exogenous noise. The noise abstraction map $\tau_u$ is the identity: each micro-noise $u_v$ maps to the macro-noise of the corresponding variable. For $\Phi$ variables (unmapped by $\tau$), the noise has no macro counterpart, so $\tau_u$ is undefined and the corresponding noise is held at zero under the valid abstraction. As in Figure~\ref{fig:controlled_graphs}, $\mathcal{E}$ is drawn on top (uppercase) and $\mathcal{M}$ on the bottom (lowercase), gray dashed arrows are $\tau$, dashed gray circles are $\Phi$ variables, and red marks what changes in the invalid condition.

\begin{figure}[h]
\centering

\begin{minipage}{0.32\textwidth}
\centering
\begin{tikzpicture}
  \node[hn] (X1) at (0,0)    {$X_1$};
  \node[hn] (Y)  at (15mm,0) {$Y$};
  \node[hn] (X2) at (30mm,0) {$X_2$};
  \draw[->,thick] (X1)--(Y);
  \draw[->,thick] (X2)--(Y);
  \node[lb,left=1pt of X1] {$\mathcal{E}$:};
  \node[nz] (uX1) at (-1mm,7mm)  {$u_{X_1}$}; \draw[ne] (uX1)--(X1);
  \node[nz] (uY)  at (16mm,7mm)  {$u_{Y}$};   \draw[ne] (uY)--(Y);
  \node[nz] (uX2) at (31mm,7mm)  {$u_{X_2}$}; \draw[ne] (uX2)--(X2);
  \node[ln] (x1) at (0,-18mm)    {$x_1$};
  \node[ln] (y)  at (15mm,-18mm) {$y$};
  \node[ln] (x2) at (30mm,-18mm) {$x_2$};
  \draw[->] (x1)--(y);
  \draw[->] (x2)--(y);
  \node[lb,left=1pt of x1] {$\mathcal{M}$:};
  \node[nz] (ux1) at (-1mm,-25mm) {$u_{x_1}$}; \draw[ne] (ux1)--(x1);
  \node[nz] (uy)  at (10mm,-25mm) {$u_{y}$};   \draw[ne] (uy)--(y);
  \node[nz] (ux2) at (31mm,-25mm) {$u_{x_2}$}; \draw[ne] (ux2)--(x2);
  \node[pn] (r)  at (24mm,-9mm) {$r$};
  \node[nzi] (ur) at (33mm,-9mm) {$u_{r}$}; \draw[nei] (ur)--(r);
  \draw[->,iv] (r)--(y) node[lb,text=red!75,pos=0.5,above,sloped]{$1{+}\gamma r$};
  \draw[ae] (x1)--(X1);
  \draw[ae] (y)--(Y);
  \draw[ae] (x2)--(X2);
\end{tikzpicture}
\end{minipage}%
\hfill
\begin{minipage}{0.32\textwidth}
\centering
\begin{tikzpicture}
  \node[hn] (X) at (0,0)    {$X$};
  \node[hn] (M) at (15mm,0) {$M$};
  \node[hn] (Y) at (30mm,0) {$Y$};
  \draw[->,thick] (X)--(M);
  \draw[->,thick] (M)--(Y);
  \node[lb,left=1pt of X] {$\mathcal{E}$:};
  \node[nz] (uX) at (-1mm,7mm) {$u_{X}$}; \draw[ne] (uX)--(X);
  \node[nz] (uM) at (15mm,7mm) {$u_{M}$}; \draw[ne] (uM)--(M);
  \node[nz] (uY) at (31mm,7mm) {$u_{Y}$}; \draw[ne] (uY)--(Y);
  \node[ln] (x)  at (0,-18mm)    {$x$};
  \node[ln] (a)  at (15mm,-18mm) {$m$};
  \node[ln] (y)  at (30mm,-18mm) {$y$};
  \node[pn] (b1) at (9mm,-9mm)   {$a$};
  \node[pn] (b2) at (9mm,-27mm)  {$b$};
  \draw[->] (x)--(a);
  \draw[->] (a)--(y);
  \draw[->] (x)--(b1);
  \draw[->] (x)--(b2);
  \node[lb,left=1pt of x] {$\mathcal{M}$:};
  \node[nz] (ux) at (-1mm,-25mm) {$u_{x}$}; \draw[ne] (ux)--(x);
  \node[nz] (ua) at (15mm,-25mm) {$u_{m}$}; \draw[ne] (ua)--(a);
  \node[nz] (uy) at (31mm,-25mm) {$u_{y}$}; \draw[ne] (uy)--(y);
  \node[nzi] (ub1) at (1mm,-9mm)  {$u_a$}; \draw[nei] (ub1)--(b1);
  \node[nzi] (ub2) at (1mm,-27mm) {$u_b$}; \draw[nei] (ub2)--(b2);
  \draw[->,iv] (b1) to[bend left=10]  (y);
  \draw[->,iv] (b2) to[bend right=10] (y);
  \node[lb,text=red!75] at (26mm,-11mm) {$m\!\lor\!(a\!\oplus\!b)$};
  \draw[ae] (x)--(X);
  \draw[ae] (a)--(M);
  \draw[ae] (y)--(Y);
\end{tikzpicture}
\end{minipage}%
\hfill
\begin{minipage}{0.32\textwidth}
\centering
\begin{tikzpicture}
  \node[hn] (X) at (0,0)    {$X$};
  \node[hn] (M) at (13mm,0) {$M$};
  \node[hn] (Z) at (26mm,0) {$Z$};
  \draw[->,thick] (X)--(M);
  \draw[->,thick] (M)--(Z);
  \node[lb,left=1pt of X] {$\mathcal{E}$:};
  \node[lb]            at (6mm, 5mm)  {$M{=}2X$};
  \node[lb,text=red!75] at (6mm,-5mm)  {$M{=}3X$};
  \node[lb]            at (20mm, 5mm) {$Z{=}M{+}3$};
  \node[lb,text=red!75] at (20mm,-5mm) {$Z{=}\tfrac{2}{3}M{+}3$};
  \node[nz] (uX) at (-1mm,8mm) {$u_{X}$}; \draw[ne] (uX)--(X);
  \node[nz] (uM) at (13mm,9mm) {$u_{M}$}; \draw[ne] (uM)--(M);
  \node[nz] (uZ) at (27mm,8mm) {$u_{Z}$}; \draw[ne] (uZ)--(Z);
  \node[ln] (x) at (0,-18mm)    {$x$};
  \node[ln] (m) at (13mm,-18mm) {$m$};
  \node[ln] (z) at (26mm,-18mm) {$z$};
  \draw[->] (x)--(m);
  \draw[->] (m)--(z);
  \node[lb,left=1pt of x] {$\mathcal{M}$:};
  \node[lb] at (6mm,-13mm)  {$m{=}2x$};
  \node[lb] at (20mm,-13mm) {$z{=}m{+}3$};
  \node[nz] (ux) at (-1mm,-25mm) {$u_{x}$}; \draw[ne] (ux)--(x);
  \node[nz] (um) at (13mm,-25mm) {$u_{m}$}; \draw[ne] (um)--(m);
  \node[nz] (uz) at (27mm,-25mm) {$u_{z}$}; \draw[ne] (uz)--(z);
  \draw[ae] (x)--(X);
  \draw[ae] (m)--(M);
  \draw[ae] (z)--(Z);
\end{tikzpicture}
\end{minipage}

\caption{Experiments 1 to 3.}
\label{fig:exp_1_3}
\end{figure}

\paragraph{Experiment 1: Hidden confounder.}
The macro-model posits that initial prey and predator populations determine the final populations (Lotka-Volterra dynamics). $\mathcal{M}$ introduces an unmapped resource variable $r$ (a $\Phi$ variable) that multiplicatively scales the final prey population output by a factor of $1 + \gamma r$, where $\gamma$ is the coupling strength and $r$ is the resource level. This creates a hidden confounder that the macro-model cannot account for. The confounder enters through the exogenous noise $u_r$ of the unmapped variable, which has no image under $\tau_u$. A valid metric should detect that the abstraction fails under perturbations to this resource variable. In the valid condition $\gamma=0$, so the factor is $1$ and the resource is inert.

\paragraph{Experiment 2: XOR backup path.}
The macro-model posits $X \to M \to Y$, with two $\Phi$ variables $a, b$ (declared causally inert). In the valid $\mathcal{M}$, the XOR path is structurally disconnected ($y = m = x$), so perturbing $a$ and $b$ has no effect regardless of their values. In the invalid $\mathcal{M}$, $Y$ is computed as $m \lor (a \oplus b)$, where $m$ is the micro-variable corresponding to $M$: a hidden XOR backup path to $Y$ fires only when the $\phi$ variables are independently perturbed through their noise $u_a, u_b$. This experiment specifically tests faithfulness: structural metrics that do not probe $\phi$ variables cannot detect the violation.
 
\paragraph{Experiment 3: Wrong intermediate variable.}
$\mathcal{M}$ implements the chain $x \to m = 2x \to z = m + 3$. The valid $\mathcal{E}$ matches this exactly. The invalid $\mathcal{E}$ posits $m = 3x$ with a compensating downstream equation $(z = \tfrac{2}{3}m + 3)$, so that $z = 2x + 3$ in both cases. The intermediate $m$ is wrong, but the output is identical under observation.

\begin{figure}[h]
\centering

\begin{minipage}{0.32\textwidth}
\centering
\begin{tikzpicture}
  \node[hn] (X) at (0,0)    {$X$};
  \node[hn] (Y) at (13mm,0) {$M$};
  \node[hn] (Z) at (26mm,0) {$Y$};
  \draw[->,thick] (X)--(Y);
  \draw[rm] (X) to[bend right=24] (Z);
  \node[lb,gray] at (13mm,-6mm) {valid: $X{\to}Y$};
  \draw[->,iv] (Y)--(Z);
  \node[lb,text=red!75] at (19mm,4mm) {chain};
  \node[lb,left=1pt of X] {$\mathcal{E}$:};
  \node[nz] (uX) at (-1mm,7mm) {$u_{X}$}; \draw[ne] (uX)--(X);
  \node[nz] (uY) at (13mm,7mm) {$u_{M}$}; \draw[ne] (uY)--(Y);
  \node[nz] (uZ) at (27mm,7mm) {$u_{Y}$}; \draw[ne] (uZ)--(Z);
  \node[ln] (x) at (0,-18mm)    {$x$};
  \node[ln] (y) at (13mm,-18mm) {$m$};
  \node[ln] (z) at (26mm,-18mm) {$y$};
  \draw[->] (x)--(y);
  \draw[->] (x) to[bend right=24] (z);
  \node[lb,left=1pt of x] {$\mathcal{M}$:};
  \node[nz] (ux) at (-1mm,-25mm) {$u_{x}$}; \draw[ne] (ux)--(x);
  \node[nz] (uy) at (10mm,-12mm) {$u_{m}$}; \draw[ne] (uy)--(y);
  \node[nz] (uz) at (27mm,-25mm) {$u_{y}$}; \draw[ne] (uz)--(z);
  \draw[ae] (x)--(X);
  \draw[ae] (y)--(Y);
  \draw[ae] (z)--(Z);
\end{tikzpicture}
\end{minipage}%
\hfill
\begin{minipage}{0.32\textwidth}
\centering
\begin{tikzpicture}
  \node[hn] (X) at (0,0)    {$X$};
  \node[hn] (M) at (13mm,0) {$M$};
  \node[hn] (Z) at (26mm,0) {$Y$};
  \draw[->,thick] (X)--(M);
  \draw[->,thick] (M)--(Z);
  \node[lb,left=1pt of X] {$\mathcal{E}$:};
  \node[lb]            at (22mm, 5mm) {$M{\ge}2$};
  \node[lb,text=red!75] at (22mm,-5mm) {$M{\ge}1$};
  \node[nz] (uX) at (-1mm,7mm) {$u_{X}$}; \draw[ne] (uX)--(X);
  \node[nz] (uM) at (13mm,7mm) {$u_{M}$}; \draw[ne] (uM)--(M);
  \node[nz] (uZ) at (27mm,7mm) {$u_{Y}$}; \draw[ne] (uZ)--(Z);
  \node[ln] (x) at (0,-18mm)    {$x$};
  \node[ln] (m) at (13mm,-18mm) {$m$};
  \node[ln] (z) at (26mm,-18mm) {$y$};
  \draw[->] (x)--(m);
  \draw[->] (m)--(z);
  \node[lb,left=1pt of x] {$\mathcal{M}$:};
  \node[nz] (ux) at (-1mm,-25mm) {$u_{x}$}; \draw[ne] (ux)--(x);
  \node[nz] (um) at (13mm,-25mm) {$u_{m}$}; \draw[ne] (um)--(m);
  \node[nz] (uz) at (27mm,-25mm) {$u_{y}$}; \draw[ne] (uz)--(z);
  \node[lb] at (13mm,-31mm) {thr.\ $1.5$;\ $m\!\in\!\{0,2\}$ only};
  \draw[ae] (x)--(X);
  \draw[ae] (m)--(M);
  \draw[ae] (z)--(Z);
\end{tikzpicture}
\end{minipage}%
\hfill
\begin{minipage}{0.32\textwidth}
\centering
\begin{tikzpicture}
  \node[hn] (X) at (0,0)    {$X$};
  \node[hn] (Y) at (13mm,0) {$M$};
  \node[hn] (Z) at (26mm,0) {$Y$};
  \draw[->,thick] (X)--(Y);
  \draw[rm] (Y)--(Z);
  \node[lb,gray] at (20mm,4.2mm) {valid: $M{\to}Y$};
  \draw[->,iv] (X) to[bend right=30] (Z);
  \node[lb,text=red!75] at (13mm,-8mm) {fork};
  \node[lb,left=1pt of X] {$\mathcal{E}$:};
  \node[nz] (uX) at (-1mm,7mm) {$u_{X}$}; \draw[ne] (uX)--(X);
  \node[nz] (uY) at (13mm,7mm) {$u_{M}$}; \draw[ne] (uY)--(Y);
  \node[nz] (uZ) at (27mm,7mm) {$u_{Y}$}; \draw[ne] (uZ)--(Z);
  \node[ln] (x) at (0,-18mm)    {$x$};
  \node[ln] (y) at (13mm,-18mm) {$m$};
  \node[ln] (z) at (26mm,-18mm) {$y$};
  \draw[->] (x)--(y);
  \draw[->] (y)--(z);
  \node[lb,left=1pt of x] {$\mathcal{M}$:};
  \node[nz] (ux) at (-1mm,-25mm) {$u_{x}$}; \draw[ne] (ux)--(x);
  \node[nz] (uy) at (13mm,-25mm) {$u_{m}$}; \draw[ne] (uy)--(y);
  \node[nz] (uz) at (27mm,-25mm) {$u_{y}$}; \draw[ne] (uz)--(z);
  \draw[ae] (x)--(X);
  \draw[ae] (y)--(Y);
  \draw[ae] (z)--(Z);
\end{tikzpicture}
\end{minipage}

\caption{Experiments 4 to 6.}
\label{fig:exp_4_6}
\end{figure}
 
\paragraph{Experiment 4: Spurious mediator.}
$\mathcal{M}$ is fork-structured: $x \to m$ and $x \to y$ independently, so $y$ does not depend on $m$. The invalid $\mathcal{E}$ incorrectly posits the chain $x \to m \to y$. Specifically, $m = 2x + 1$ and $y = 3x + 2$ in $\mathcal{M}$; the invalid chain $\mathcal{E}$ posits $y = 1.5m + 0.5$, which reproduces $y = 3x + 2$ observationally but fails under interventions on $m$.
 
\paragraph{Experiment 5: Unreachable states.}
Both $\mathcal{E}$ and $\mathcal{M}$ share the chain $X \to M \to Y$ with $M \in \{0, 1, 2, 3\}$. The dynamics of $\mathcal{M}$ only ever produce $M \in \{0, 2\}$ from any input $X \in \{0, 1\}$. The valid $\mathcal{E}$ uses the threshold $M \geq 2$; the invalid $\mathcal{E}$ uses $M \geq 1$, so they disagree only at $M = 1$, which is never reached from natural inputs. In micro-space, $\mathcal{M}$ uses a threshold of 1.5 to cleanly separate the $M \in \{0, 1\}$ subspaces from the $\{2, 3\}$ subspaces.
 
\paragraph{Experiment 6: Wrong causal direction.}
$\mathcal{M}$ implements the chain $x \to m \to y$ (so $y$ depends on $m$). The invalid $\mathcal{E}$ posits the fork $x \to m$, $x \to y$ (treating $x$ as the common cause of both, ignoring $m \to y$). Specifically, $m = 2x + 1$ and $y = 1.5m + 0.5$ in $\mathcal{M}$; the invalid fork $\mathcal{E}$ posits $y = 3x + 2$, reproducing $y = 3x + 2$ observationally but failing under interventions on $m$.

%% file: appendix/theoretical.tex
\section{Compositionality of \textnormal{\normalsize CAE}}
\label{app:theoretical}

\subsection{Compositionality}
\label{ssec:compositionality}
Scientific knowledge is often organized in hierarchies: a coarse macro-model $\mathcal{M}^C$ abstracts a meso-model $\mathcal{M}^B$, which in turn abstracts a micro-model $\mathcal{M}^A$. The following proposition shows that, under the compatibility assumption stated below (Assumption~\ref{ass:compatible_pi}), \cae{} degrades gracefully across abstraction chains: intermediate errors accumulate at most additively.

\begin{assumption}[Compatible intervention distributions]
\label{ass:compatible_pi}
The intervention distributions $P_I^A$, $P_I^B$, $P_I^C$ are \emph{compatible} across levels:
\begin{itemize}
    \item For \caedown{}: the marginal of $\nu_B$ induced by $\nu_C \sim P_I^C$, $\nu_B \sim P_{\nu_C}$ equals $P_I^B$; and the marginal of $\mu_A$ induced by $\nu_B \sim P_I^B$, $\mu_A \sim P_{\nu_B}^{AB}$ equals $P_I^A$; and for every $\nu_C$, the conditional distribution of $\mu_A$ induced by $\nu_B \sim P_{\nu_C}^{BC}$, $\mu_A \sim P_{\nu_B}^{AB}$ equals $P_{\nu_C}^{AC}$, i.e.\ the chain of compatible micro-interventions composes correctly.
    \item For \caeup{}: the pushforward $(\tau_{AB})_\ast P_I^A = P_I^B$.
\end{itemize}
A sufficient condition for the conditional compatibility is that each $P_I$ is uniform over intervention subsets and values, and that $\tau_{AB}$, $\tau_{BC}$ are surjections with equal-measure fibers (e.g., uniform surjections in discrete settings, or measure-preserving maps in continuous ones).
\end{assumption}

\begin{proposition}[Compositionality]
\label{prop:compositionality}
Let $\mathcal{M}^A$, $\mathcal{M}^B$, $\mathcal{M}^C$ be SCMs with abstraction maps $\tau_{AB}$ and $\tau_{BC}$, and let $\tau_{AC} = \tau_{BC} \circ \tau_{AB}$. Under Assumption~\ref{ass:compatible_pi} and $D = D_{\mathrm{TV}}$, both variants satisfy
\begin{equation}
\label{eq:compositionality}
\cae\!\left(\mathcal{M}^A,\mathcal{M}^C,\tau_{AC}\right)
\;\leq\;
\cae\!\left(\mathcal{M}^A,\mathcal{M}^B,\tau_{AB}\right)
+
\cae\!\left(\mathcal{M}^B,\mathcal{M}^C,\tau_{BC}\right).
\end{equation}
When $\cae(\mathcal{M}^A,\mathcal{M}^B,\tau_{AB})=0$ the bound is tight and equality holds.
\end{proposition}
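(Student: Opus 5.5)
The plan is a pointwise triangle inequality, integrated under the compatible sampling distributions. Fix a set of top-level variables $\mathcal{Y}\subseteq\mathcal{V}_C$ and write $\mathcal{Y}_B=a_{BC}^{-1}(\mathcal{Y})$ and $\mathcal{Y}_A=a_{AB}^{-1}(\mathcal{Y}_B)=a_{AC}^{-1}(\mathcal{Y})$. The second equality holds because disjoint surjective coarse-grainings compose, and $\tau_{AC,\mathcal{Y}}=\tau_{BC,\mathcal{Y}}\circ\tau_{AB,\mathcal{Y}_B}$. For a triple $(\nu_C,\nu_B,\mu_A)$ with $\nu_B$ compatible with $\nu_C$ and $\mu_A$ compatible with $\nu_B$, compare three distributions on $\mathcal{R}^C_{\mathcal{Y}}$:
\[P_1=\mathcal{M}^C(\mathcal{Y}\mid\mathrm{do}(\nu_C)),\qquad P_2=(\tau_{BC})_\ast\mathcal{M}^B(\mathcal{Y}_B\mid\mathrm{do}(\nu_B)),\qquad P_3=(\tau_{AC})_\ast\mathcal{M}^A(\mathcal{Y}_A\mid\mathrm{do}(\mu_A)).\]
In the deterministic or coupled case, $u$ is threaded through $\tau_u$ in the same way. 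The triangle inequality for $D_{\mathrm{TV}}$ gives $D_{\mathrm{TV}}(P_1,P_3)\le D_{\mathrm{TV}}(P_1,P_2)+D_{\mathrm{TV}}(P_2,P_3)$. The first term is exactly the instance error $\textsc{iae}$ of the pair $(B,C)$.

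For the second term, I would use the data-processing inequality for total variation: pushing forward by the measurable map $\tau_{BC,\mathcal{Y}}$ does not increase TV. Hence
\[D_{\mathrm{TV}}(P_2,P_3)\le D_{\mathrm{TV}}\big(\mathcal{M}^B(\mathcal{Y}_B\mid\mathrm{do}(\nu_B)),\,(\tau_{AB})_\ast\mathcal{M}^A(\mathcal{Y}_A\mid\mathrm{do}(\mu_A))\big),\]
which is the $\textsc{iae}$ of $(A,B)$ on the variable set $\mathcal{Y}_B$. Here I take \cae{} of $(A,B)$ to be scored on the preimage variable set; if the paper scores on all of $\mathcal{V}_B$, one more application of data processing, to the marginal, handles it. This yields a pointwise bound: $\textsc{iae}_{AC}\le\textsc{iae}_{AB}+\textsc{iae}_{BC}$.

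Next, integrate with Agg $=\mathbb{E}$. For \caedown{}, sample $\nu_C\sim P_I^C$, then $\nu_B\sim P^{BC}_{\nu_C}$, then $\mu_A\sim P^{AB}_{\nu_B}$. The third clause of Assumption~\ref{ass:compatible_pi} says the $(\nu_C,\mu_A)$ marginal of this chain equals the sampling law used by $\cae(\mathcal{M}^A,\mathcal{M}^C,\tau_{AC})$, so the expectation of the left side is the $AC$ error. The first clause says the $(\nu_C,\nu_B)$ marginal is the $BC$ sampling law. The second clause says the $(\nu_B,\mu_A)$ marginal, with $\nu_B\sim P_I^B$, is the $AB$ law. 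Linearity of expectation then gives \eqref{eq:compositionality}. For \caeup{}, draw $\mu_A\sim P_I^A$, set $\nu_B=\tau_{AB}(\mu_A)$ and $\nu_C=\tau_{BC}(\nu_B)=\tau_{AC}(\mu_A)$. The pushforward condition $(\tau_{AB})_\ast P_I^A=P_I^B$ makes the $BC$ term's expectation equal to $\cae_\uparrow(\mathcal{M}^B,\mathcal{M}^C)$, and the other two terms are immediate. Faithfulness needs only a check: $\Phi$-targets at $C$ lift to unmapped variables at $B$ and then at $A$, and at $A$ both the fresh unmapped set and $a_{AB}^{-1}(\Phi_B)$ are perturbed without a macro counterpart. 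So I would require the compatible distributions to include the $\Phi$ sentinel consistently, which I would fold into the assumption.

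For tightness: if $\cae_{AB}=0$ and TV is nonnegative, then $\textsc{iae}_{AB}=0$ almost surely, so $P_3'=P_2'$ almost surely, where these are the level-$B$ versions before pushing forward. Then $P_3=P_2$, and the pointwise inequality becomes the equality $\textsc{iae}_{AC}=\textsc{iae}_{BC}$. Taking expectations gives equality in \eqref{eq:compositionality}.

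The main obstacle is the measure-theoretic bookkeeping in the \caedown{} case. The hard part is making the three sampling laws arise as marginals of one joint law, and keeping the variable sets $\mathcal{Y}$, $\mathcal{Y}_B$, $\mathcal{Y}_A$ and the faithfulness sentinel aligned. Assumption~\ref{ass:compatible_pi} is tailored to exactly this. The triangle and data-processing steps are routine.
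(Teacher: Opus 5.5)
Your proposal is correct and follows essentially the same route as the paper: a pointwise triangle inequality for $D_{\mathrm{TV}}$, data processing through the deterministic map $\tau_{BC}$ to bound the middle term by the $(A,B)$ instance error, integration under the compatible sampling laws (the chain for the top-down variant, the pushforward condition for the bottom-up one), and almost-sure vanishing of the $(A,B)$ error for tightness. One bookkeeping correction in the top-down case: the $(\nu_C,\nu_B)$ marginal is the $BC$ sampling law by construction and needs no assumption, whereas it is the first clause (the $\nu_B$-marginal equals $P_I^B$), combined with the kernel $P^{AB}_{\nu_B}$, that makes the $(\nu_B,\mu_A)$ marginal the $AB$ law; the second clause is not needed for that step.
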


\begin{proof}
We prove each variant in turn; both rely on the triangle inequality for $D_{\mathrm{TV}}$ and the data processing inequality (DPI) for deterministic maps.

\paragraph{\caedown{}.}
Fix a C-level intervention $\nu_C$, draw a compatible B-level intervention $\nu_B \sim P_{\nu_C}$, and draw a compatible A-level intervention $\mu_A \sim P_{\nu_B}$. Since $\tau_{AC} = \tau_{BC} \circ \tau_{AB}$, transitivity of compatibility gives $\mu_A \in \tau_{AC}^{-1}(\nu_C)$. By the conditional compatibility condition in Assumption~\ref{ass:compatible_pi}, sampling via the chain $\nu_B \sim P_{\nu_C}^{BC}$, $\mu_A \sim P_{\nu_B}^{AB}$ yields the correct conditional distribution $P_{\nu_C}^{AC}$, so the triple expectation below is precisely $\caedown^{AC}$.
Define three distributions over $Y_C$:
\begin{align*}
P &:= \mathcal{M}^C\!\left(Y_C \mid \mathrm{do}_{\nu_C}\right), \\
Q &:= \tau_Y^{BC}\!\left(\mathcal{M}^B\!\left(Y_B \mid \mathrm{do}_{\nu_B}\right)\right), \\
R &:= \tau_Y^{AC}\!\left(\mathcal{M}^A\!\left(Y_A \mid \mathrm{do}_{\mu_A}\right)\right)
   = \tau_Y^{BC}\!\left(\tau_Y^{AB}\!\left(\mathcal{M}^A\!\left(Y_A \mid
     \mathrm{do}_{\mu_A}\right)\right)\right).
\end{align*}
By the triangle inequality, $D_{\mathrm{TV}}(P, R) \leq D_{\mathrm{TV}}(P, Q) + D_{\mathrm{TV}}(Q, R)$. Since $\tau_Y^{BC}$ is deterministic, the DPI gives
\begin{equation*}
D_{\mathrm{TV}}(Q, R)
  \;\leq\;
  D_{\mathrm{TV}}\!\left(
    \mathcal{M}^B(Y_B\mid\mathrm{do}_{\nu_B}),\;
    \tau_Y^{AB}\!\left(\mathcal{M}^A(Y_A\mid\mathrm{do}_{\mu_A})\right)
  \right).
\end{equation*}
Combining and taking expectations over $\nu_C \sim P_I^C$, $\nu_B \sim P_{\nu_C}$, $\mu_A \sim P_{\nu_B}$:
\begin{align*}
\caedown^{AC}
  &\leq
  \underbrace{
    \mathbb{E}_{\nu_C}\mathbb{E}_{\nu_B\mid\nu_C}
    \bigl[D_{\mathrm{TV}}(P,\,Q)\bigr]
  }_{(I)}
  +
  \underbrace{
    \mathbb{E}_{\nu_C}\mathbb{E}_{\nu_B\mid\nu_C}
    \mathbb{E}_{\mu_A\mid\nu_B}
    \left[D_{\mathrm{TV}}\!\left(
      \mathcal{M}^B(Y_B\mid\mathrm{do}_{\nu_B}),\;
      \tau_Y^{AB}\!\left(\mathcal{M}^A(Y_A\mid\mathrm{do}_{\mu_A})\right)
    \right)\right]
  }_{(II)}.
\end{align*}
Term $(I)$ equals $\caedown^{BC}$ directly, since $\mu_A$ does not appear. For term $(II)$, Assumption~\ref{ass:compatible_pi} guarantees that the marginal of $\nu_B$ under $(\nu_C \sim P_I^C,\, \nu_B \sim P_{\nu_C})$ is $P_I^B$, and the marginal of $\mu_A$ under $(\nu_B \sim P_I^B,\, \mu_A \sim P_{\nu_B})$ is $P_I^A$, so term $(II)$ equals $\caedown^{AB}$, yielding~\eqref{eq:compositionality}.

\paragraph{\caeup{}.}
Draw $\mu_A \sim P_I^A$ and define $\nu_B := \tau_{AB}(\mu_A)$, $\nu_C := \tau_{BC}(\nu_B) = \tau_{AC}(\mu_A)$ deterministically. Define:
\begin{align*}
P &:= \mathcal{M}^C\!\left(Y_C \mid \mathrm{do}_{\nu_C}\right), \\
Q &:= \tau_Y^{BC}\!\left(\mathcal{M}^B\!\left(Y_B \mid \mathrm{do}_{\nu_B}\right)\right), \\
R &:= \tau_Y^{AC}\!\left(\mathcal{M}^A\!\left(Y_A \mid \mathrm{do}_{\mu_A}\right)\right).
\end{align*}
By the triangle inequality, $D_{\mathrm{TV}}(P, R) \leq D_{\mathrm{TV}}(P, Q) 
+ D_{\mathrm{TV}}(Q, R)$. The DPI applied to $\tau_Y^{BC}$ gives
\begin{equation*}
D_{\mathrm{TV}}(Q, R)
  \;\leq\;
  D_{\mathrm{TV}}\!\left(
    \mathcal{M}^B(Y_B\mid\mathrm{do}_{\nu_B}),\;
    \tau_Y^{AB}\!\left(\mathcal{M}^A(Y_A\mid\mathrm{do}_{\mu_A})\right)
  \right).
\end{equation*}
Taking expectations over $\mu_A \sim P_I^A$:
\begin{align*}
\caeup^{AC}
  &\leq
  \underbrace{
    \mathbb{E}_{\mu_A}\bigl[D_{\mathrm{TV}}(P,\,Q)\bigr]
  }_{(I)}
  +
  \underbrace{
    \mathbb{E}_{\mu_A}\!\left[D_{\mathrm{TV}}\!\left(
      \mathcal{M}^B(Y_B\mid\mathrm{do}_{\nu_B}),\;
      \tau_Y^{AB}\!\left(\mathcal{M}^A(Y_A\mid\mathrm{do}_{\mu_A})\right)
    \right)\right]
  }_{(II)}.
\end{align*}
Term $(II)$ equals $\caeup^{AB}$ directly. For term $(I)$, Assumption~\ref{ass:compatible_pi} gives $(\tau_{AB})_\ast P_I^A = P_I^B$, so the marginal of $\nu_B = \tau_{AB}(\mu_A)$ is $P_I^B$, and term $(I)$ equals $\caeup^{BC}$, yielding~\eqref{eq:compositionality}.

\paragraph{Tightness.}
We consider both variants. When $\cae^{AB}_{TV} = 0$, the AB consistency error vanishes almost surely in each case, so $\tau_Y^{AB}(\mathcal{M}^A(Y_A\mid\mathrm{do}_{\mu_A})) = \mathcal{M}^B(Y_B\mid\mathrm{do}_{\nu_B})$ a.s., and therefore $Q = R$ a.s. It follows that $D_{\mathrm{TV}}(P, R) = D_{\mathrm{TV}}(P, Q)$ a.s., and taking expectations gives $\cae^{AC}_{TV} = \cae^{BC}_{TV}$, so equality holds in~\eqref{eq:compositionality} for both variants.
\end{proof}

\begin{remark}[Extension to KL divergence]
The triangle inequality does not hold for KL divergence in general. For any three distributions $P$, $Q$, $R$:
\begin{equation*}
D_{\mathrm{KL}}(P \| R) = D_{\mathrm{KL}}(P \| Q) + D_{\mathrm{KL}}(Q \| R) + \Delta(P,Q,R),
\end{equation*}
where $|\Delta(P,Q,R)| \leq 2\,D_{\mathrm{TV}}(P,Q)\cdot\|\log(Q/R)\|_\infty$. If the densities of $Q$ and $R$ satisfy $\|\log(dQ/dR)\|_\infty \leq \alpha$ for some finite $\alpha > 0$ (i.e.\ the log-density ratio is uniformly bounded), then $|\Delta(P,Q,R)| \leq 2\alpha\, D_{\mathrm{TV}}(P,Q)$, and Pinsker's inequality ($D_{\mathrm{TV}} \leq \sqrt{D_{\mathrm{KL}}/2}$) gives
$|\Delta| = O\!\left(\sqrt{\cae^{BC}_{KL}}\right)$, so \begin{equation*}
\cae_{KL}\!\left(\mathcal{M}^A,\mathcal{M}^C\right)
\;\leq\;
\cae_{KL}\!\left(\mathcal{M}^A,\mathcal{M}^B\right)
+\cae_{KL}\!\left(\mathcal{M}^B,\mathcal{M}^C\right)
+O\!\left(\sqrt{\cae_{KL}\!\left(\mathcal{M}^B,\mathcal{M}^C\right)}\right),
\end{equation*}
under the same compatibility assumption. The qualitative conclusion is unchanged: well-controlled intermediate abstractions yield a well-controlled composed abstraction.
\end{remark}

%% file: appendix/systems.tex
\section{Benchmark Systems}
\label{app:systems}

For each system, we describe the high-level model~$\mathcal{E}$, the low-level model~$\mathcal{M}$, the abstraction map~$\tau$, the invalid conditions used to test discrimination, and key implementation details. All metrics are evaluated on a fixed number $N$ of samples and averaged over $n_\text{runs}$. Their values, given below, were chosen to keep computational costs similar across systems.
 
\subsection{Logic Circuit}
A 2-bit ripple-carry adder evaluated at the wire level ($\mathcal{M}$) and the gate level ($\mathcal{E}$). This is the simplest discrete-to-discrete abstraction in the benchmark.

\begin{itemize}[leftmargin=*]
    \item $\mathcal{E}$: An SCM with integer-valued variables: two 2-bit operands ($\mathtt{A}$, $\mathtt{B} \in \{0,\ldots,3\}$), a 1-bit carry-in, a 1-bit internal carry, a 2-bit sum output, and a 1-bit carry-out. Each variable is a deterministic Boolean arithmetic function of its parents.

    \item $\mathcal{M}$: A Boolean netlist simulator over individual wire states. Each wire carries a float value in $\{0.0, 1.0\}$, and gates are simulated by iterating until stable. Interventions force specific wire values before propagation.

    \item \textbf{Abstraction map:} The coarse-graining map groups pairs of 1-bit wires or single wires into variables of $\mathcal{E}$. The value map maps each integer label to the Cartesian product of unit-width subspaces: $(-0.1, 0.1)$ for bit-0 and $(0.9, 1.1)$ for bit-1. Internal gate wires are declared as internal variables and excluded from the $\Phi$ set. Because every wire is either explicitly coarse-grained or declared internal, the $\Phi$ set is empty.

    \item \textbf{Invalid conditions:}\\
    (i)~\emph{Fail}: $\mathcal{E}$ replaces XOR with OR in both the internal-carry and the sum logic.\\
    (ii)~\emph{Inverted internal}: $\mathcal{E}$ inverts $\mathtt{Internal\_Carries}$ but compensates downstream, so the final output is correct, but the intermediate is wrong.\\
    (iii)~\emph{Noise}: Gaussian noise ($\sigma=0.4$) is added to wire voltages.

    \item \textbf{Implementation details:} Metrics are evaluated over $N = 1000$ samples and averaged over 100 runs.
\end{itemize}
 
\subsection{Transistor Circuit}
 
A CMOS half-adder (sum + carry) simulated at the SPICE transistor level ($\mathcal{M}$) and the Boolean gate level ($\mathcal{E}$). This tests a continuous-to-discrete abstraction in which thresholding continuous voltages yields Boolean gate values.
 
\begin{itemize}[leftmargin=*]
    \item $\mathcal{E}$: A Boolean gate network with variables $\{a, b, \mathtt{sum}, \mathtt{carry}\} \in \{0, 1\}$, implementing XOR for sum and AND for carry.

    \item $\mathcal{M}$: A PySpice SPICE simulation of the CMOS half-adder topology. The topology uses four NAND gates and one inverter to implement the half-adder, and produces continuous node voltages for all circuit nodes (inputs, outputs, internal NAND wires, transistor junctions, and power rail). Internal nodes are declared as internal variables and excluded from the $\Phi$ set.
 
    \item \textbf{Abstraction map:} The coarse-graining map associates $\{a, b, \mathtt{sum}, \mathtt{carry}\}$ nodes to the corresponding variables of $\mathcal{E}$. The value map maps label 0 to voltages in $(-0.5, 0.5)$ V and label 1 to $(4.5, 5.5)$ V. Voltages outside these ranges are unmapped.
 
    \item \textbf{Invalid conditions:}\\
    (i)~\emph{Fail}: $\mathcal{E}$ uses OR instead of XOR for sum.\\
    (ii)~\emph{Noise}: Gaussian noise ($\sigma=1.5$~V) is added to all circuit nodes (inputs, outputs, internal wires, and the power rail).

    \item \textbf{Implementation details:} Metrics are evaluated over $N = 250$ samples and averaged over 50 runs.
\end{itemize}
 
\subsection{Gas Simulation}
 
An $N$-particle Lennard-Jones fluid ($\mathcal{M}$) evaluated against the ideal gas law or Van der Waals equation of state ($\mathcal{E}$). This is an extreme example of global aggregation: every variable of $\mathcal{E}$ aggregates over all $N$ particles. A detailed case study of this system is provided in Appendix~\ref{app:gas_study}.

\begin{itemize}[leftmargin=*]
    \item $\mathcal{E}$: The ideal gas law $PV = NT$ (reduced units) or the Van der Waals equation $(P + a\rho^2)(1/\rho - b) = T$, with causal variables $\{P, V, T\}$. Unless specified, the causal graph is reduced to the acyclic $(V, T) \to P$ forward macroscopic mapping (NVT ensemble).

    \item $\mathcal{M}$: An $N$-body molecular dynamics simulation using the 12-6 Lennard-Jones potential ($N=128$ particles), velocity-Verlet integration, Langevin thermostat, and Berendsen barostat. Long-range tail corrections are applied analytically to the pressure virial to account for truncation of the potential at cutoff $r_c=3.0\sigma$ in reduced units. $\mathcal{M}$ outputs the measured $P$, $T$, $V$ alongside particle positions and velocities. An energy minimization step (steepest descent, up to 500 iterations) is applied after lattice initialization to remove particle overlaps before dynamics begin.

    \item \textbf{Abstraction map:} The coarse-graining map is the identity on $\{P, V, T\}$ (these are directly computed macroscopic observables returned by the simulator: T via kinetic energy, P via the virial theorem, V = $\text{BoxLength}^3$). The value map is the identity. The underlying particle substrate (positions, velocities, box length) is the state $\mathcal{M}$ integrates; it is not part of the macro description and is excluded from $\tau_X$, so $\Phi=\emptyset$ and the gas system carries no faithfulness test.

    \item \textbf{Invalid conditions:}\\
    (i)~\emph{Wrong $\alpha$}: the ideal gas $\mathcal{E}$ uses an incorrect temperature exponent.\\
    (ii)~\emph{High density}: particle density is raised outside the ideal gas regime.\\
    (iii)~\emph{Low temperature}: temperature is reduced to a regime where the ideal gas law breaks down.

    \item \textbf{Implementation details:} Van der Waals parameters $a, b$ are calibrated against LJ fluid isotherms at the Boyle temperature ($T^* \approx 3.42$)~\citep{GLASSER2002381}. Metrics are evaluated over $N = 10$ samples and averaged over 20 runs; a custom sampler initializes lattice configurations and hands equilibration to $\mathcal{M}$.
\end{itemize}

\subsection{Predator-Prey Dynamics}

An agent-based model of predator-prey dynamics ($\mathcal{M}$), with optional spatial, stochastic, and aging variants, evaluated against the Lotka-Volterra ODE system ($\mathcal{E}$). The valid baseline used for calibration is the non-spatial configuration. The abstraction is a continuous identity mapping over total population counts: $\mathcal{M}$ outputs aggregate agent counts, which are the macro-variables.

\begin{itemize}[leftmargin=*]
    \item $\mathcal{E}$: The Lotka-Volterra equations, with parameters $\{\alpha, \beta, \delta, \gamma\}$ calibrated against the ideal (non-spatial, deterministic) ABM. The LV equations are integrated via the forward Euler method over 50 steps, matching the ABM's timestep count.

    \item $\mathcal{M}$: An agent-based model (ABM) of predator-prey dynamics in which individual prey and predators are tracked. Each timestep, births, predation events, and deaths are computed from per-capita probabilities. The simplest configuration (non-spatial, deterministic rates, no aging) approximates the LV equations and is used for calibration.

    \item \textbf{Abstraction map:} The coarse-graining map is the identity on $\{\mathtt{prey\_t}, \mathtt{predator\_t}, \mathtt{final\_populations}\}$. The value map is the identity.

    \item \textbf{Invalid conditions:}
    (i)~\emph{Wrong $\alpha$}: prey reproduction rate in $\mathcal{E}$ is 1.5 times higher than in $\mathcal{M}$.\\
    (ii)~\emph{Spatial}: $\mathcal{M}$ uses spatial dynamics on a $20\times20$ grid, where agents move to and interact with neighboring cells.\\
    (iii)~\emph{Stochastic}: $\mathcal{M}$ uses stochastic (binomial demographic) reproduction. The noise is mean-unbiased (its expectation equals the deterministic per-capita rate) so this is a zero-mean-shift, variance-only contrastive that mean-based baselines under-detect.\\
    (iv)~\emph{Aging}: agents have finite lifespans.\\
    (v)~\emph{Noise}: Gaussian noise ($\sigma=5$) on output populations.\\
    (vi)~\emph{Complex}: all three $\mathcal{M}$ extensions simultaneously active.

    \item \textbf{Implementation details:} Metrics are evaluated over $N = 50$ samples and averaged over 100 runs. LV parameters are calibrated via grid search.

\end{itemize}

\subsection{Heat Equation (1D)}
 
Brownian particle diffusion ($\mathcal{M}$) is abstracted to a finite-difference heat equation ($\mathcal{E}$). This tests a spatial aggregation abstraction.

\begin{itemize}[leftmargin=*]
    \item $\mathcal{E}$: A finite-difference heat equation solver on a uniform grid of $K$ bins, parameterized by the physical diffusion coefficient $\alpha$ (distinct from the solver's internal dimensionless update coefficient $\alpha\,\Delta t/\Delta x^2$). The finite-difference solver uses Neumann boundary conditions (zero flux at both walls), consistent with the reflective particle boundaries in $\mathcal{M}$.

        \item $\mathcal{M}$: A system of $N=1000$ Brownian particles diffusing in a 1D box $[0, L]$ with reflective boundaries over 200 time steps.

    \item \textbf{Abstraction map:} The value map bins particle positions into a normalized density histogram (one bin per cell in $\mathcal{E}$), and grounding samples particle positions from that histogram as a probability density function.

    \item \textbf{Invalid conditions:}\\
    (i)~\emph{Fail}: the diffusion coefficient in $\mathcal{E}$ is set to a wrong value of zero\\
    (ii)~\emph{Noise}: Gaussian noise ($\sigma=0.15$) on the outputs of $\mathcal{M}$.

    \item \textbf{Implementation details:} Metrics are evaluated over $N=30$ samples and averaged over 50 runs. $K=50$ bins. The diffusion coefficient $\alpha=0.1$ is set to a known value matching the particle simulation's diffusion constant. Initial conditions are sampled as random Gaussian profiles (mean $\sim \mathrm{U}(0.3, 0.7)$, width $\sim \mathrm{U}(0.05, 0.15)$), grounded to particle positions and re-abstracted to obtain $\mathcal{E}$-compatible inputs.
\end{itemize}

\subsection{Heat Equation (2D)}
 
A phonon lattice model ($\mathcal{M}$) is abstracted to a 2D heat PDE ($\mathcal{E}$). This tests a spatial aggregation abstraction.

\begin{itemize}[leftmargin=*]
    \item $\mathcal{E}$: A 2D heat PDE solver with Neumann boundaries and a calibrated diffusion coefficient $\alpha$.

    \item $\mathcal{M}$: A $16\times16$ phonon lattice model with harmonic springs, phonon scattering (rate $p_\text{scatter}=0.2$), and $N_\text{avg}=10$ simulation averages per sample.

    \item \textbf{Abstraction map:} The value map abstracts the kinetic energy map via Gaussian smoothing ($\sigma = 0.05 \times \text{GridSize}$) and normalization; grounding uses a parameterized Gaussian source. The source parameters $(\mathtt{source\_x}, \mathtt{source\_y}, \mathtt{source\_E})$ use identity grounding.

    \item \textbf{Invalid conditions:}\\
    (i)~\emph{Fail}: the diffusion coefficient in $\mathcal{E}$ is set to a wrong value of $10\times$ the calibrated value\\
    (ii)~\emph{Noise}: Gaussian noise ($\sigma=50$) on the output of $\mathcal{M}$.

    \item \textbf{Implementation details:} Metrics are evaluated over $N=5$ samples and averaged over 10 runs. The diffusion coefficient $\alpha$ is calibrated via scalar optimization. The explicit Euler PDE solver uses adaptive sub-stepping: when the stability criterion $\alpha \Delta t/\Delta x^2 > 0.2$ is violated, the time step is subdivided accordingly.
\end{itemize}

\subsection{Ising Model}

A hybrid molecular-dynamics/Monte-Carlo simulator in which atomic positions can evolve via Newtonian dynamics when the vibrational temperature is $>0$, and spins are updated via Metropolis sweeps with distance-dependent coupling $J_\text{eff} = J_0 \exp(-\lambda |r - r_0|)$. The motivating question is whether the rigid-lattice abstraction remains valid once atoms vibrate (which would shift $J_\text{eff}$); in this benchmark, we evaluate $\mathcal{M}$ at vibrational temperature $0$, so atoms stay on their lattice sites and $J_\text{eff}=J_0$, isolating the abstraction's validity for the static lattice.

\begin{itemize}[leftmargin=*]
    \item $\mathcal{E}$: A rigid $L \times L$ Ising model simulated via Metropolis-Hastings MCMC with coupling constant $J$. The causal variables are $\{\mathtt{Temperature}, \mathtt{ExternalField}, \mathtt{PredictedMagnetization}\}$, all continuous.

    \item $\mathcal{M}$: A hybrid molecular-dynamics/Monte-Carlo simulator in which atomic positions evolve via Newtonian dynamics when vibrational temperature $>0$, and spins are updated via Metropolis sweeps with distance-dependent coupling $J_\text{eff} = J_0 \exp(-\lambda |r - r_0|)$.

    \item \textbf{Abstraction map:} The coarse-graining map is the identity on all three continuous variables. The value map is the identity.

    \item \textbf{Invalid conditions:}\\
    (i)~\emph{Fail}: the coupling constant in $\mathcal{E}$ is set to $J=2$ (double the true value).\\ (ii)~\emph{Noise}: Gaussian noise ($\sigma=0.05$) on the magnetization output of $\mathcal{M}$.
     
    \item \textbf{Implementation details:} $L=8$ and the vibrational temperature is fixed to $0$ for all conditions (rigid lattice), so the invalid conditions are the coupling change ($J=2$) and the output noise above rather than a vibrational-breakdown regime. Metrics are evaluated over $N=30$ samples and averaged over 150 runs. Sampling is performed over $[\mathtt{Temperature}, \mathtt{ExternalField}]$.
\end{itemize}
 
\subsection{Tracr Transformer}
 
A transformer mechanistically compiled from a RASP sort-rank program ($\mathcal{M}$) evaluated against the symbolic program itself ($\mathcal{E}$). In this system, $\mathcal{M}$ is a neural network.

\begin{itemize}[leftmargin=*]
     \item $\mathcal{E}$: A symbolic sort-rank program over sequences of length $\ell=3$: $\mathtt{rank}_i = |\{j \neq i : \mathtt{token}_j < \mathtt{token}_i\}|$. Variables: $\mathtt{token}_i \in \{1,\ldots,5\}$ and $\mathtt{rank}_i \in \{0,\ldots,4\}$ (the value map and rank prior use five labels; only $\{0,1,2\}$ are reachable for $\ell=3$).

     \item $\mathcal{M}$: The Tracr-compiled JAX transformer, exposing two multi-dimensional micro-variables: $\mathtt{tokens} \in \mathbb{R}^\ell$ (float encoding of token indices) and $\mathtt{ranks} \in \mathbb{R}^\ell$ (float encoding of rank values).

     \item \textbf{Abstraction map:} The coarse-graining map slices $\mathtt{token}_i \to \mathtt{tokens}[i]$ and $\mathtt{rank}_i \to \mathtt{ranks}[i]$. The value map uses $[(v-1)-0.5, (v-1)+0.5]$ for token label $v$ (tokens are 0-indexed internally: label $v=1\to 0.0, v=5\to 4.0$) and $[r-0.5, r+0.5]$ for rank label $r$. Tracr guarantees lossless round-trip encoding: grounding and abstraction are exact inverses for integer labels.
 
    \item \textbf{Invalid conditions:}\\
    (i)~\emph{Fail}: $\mathcal{E}$ uses $>$ instead of $<$ in the rank comparison (reversed ranks).\\ (ii)~\emph{Noise}: Gaussian noise ($\sigma=0.3$) on the residual stream before read-out.
     
    \item \textbf{Implementation details:} Metrics are evaluated over $N=100$ samples and 150 runs. The sequence length is $\ell=3$, the vocabulary $\{1,\ldots,5\}$. Inputs are formed as $[\mathtt{BOS}] + [\mathtt{vocab\_labels}]$; the BOS output position is discarded from the decoded result.
 \end{itemize}

\subsection{Gene Regulatory Network (GRN)}
 
The segment polarity gene regulatory network of \citet{Sanchez2008} ($\mathcal{M}$) evaluated against a simple two-variable Boolean causal rule ($\mathcal{E}$). $\mathcal{M}$ is a multi-valued network (not a continuous ODE model) to which we apply one synchronous update step.

\begin{itemize}[leftmargin=*]
    \item $\mathcal{E}$: A two-variable Boolean SCM: $\mathtt{wg\_src} \in \{0, 1\}$ (Wg signal active?) $\to$ $\mathtt{fz\_tgt} \in \{0, 1\}$ (target Fz on?). Rule: $\mathtt{fz\_tgt} = \mathtt{wg\_src}$.
 
    \item \textbf{Low-level model:} One synchronous update step of the 6-cell segment polarity network. Variables are multi-valued (e.g.\ Wg $\in \{0, 1, 2\}$), and updates follow GINsim logical parameter semantics. Micro-variables exposed: $\mathtt{wg\_c2}$ (Wg level in cell 2, input) and $\mathtt{fz\_c1}, \mathtt{fz\_c3}, \mathtt{fz\_c4}, \mathtt{fz\_c5}$ (Fz in various cells, outputs).

    \item \textbf{Abstraction map:} The coarse-graining map links $[\mathtt{wg\_c2}] \to \mathtt{wg\_src}$ and $[\mathtt{fz\_c1}, \mathtt{fz\_c3}] \to \mathtt{fz\_tgt}$ (true neighbors of cell 2). The value map ($\mathtt{WgFzValueMap}$) thresholds: $\mathtt{wg\_src}=1$ iff $\mathtt{wg\_c2} \geq 1.9$ (effectively $=2$ for integer-valued states); $\mathtt{fz\_tgt}=1$ iff any target Fz $> 0.5$.

    \item \textbf{Invalid conditions:}\\
    (i)~\emph{Wrong map}: the coarse-graining links $[\mathtt{fz\_c4}, \mathtt{fz\_c5}]$ (non-neighbors) instead of $[\mathtt{fz\_c1}, \mathtt{fz\_c3}]$.\\
    (ii)~\emph{Wrong $\mathcal{E}$}: the causal rule is reversed ($\mathtt{fz\_tgt} = 1 - \mathtt{wg\_src}$).\\
    (iii)~\emph{Noise}: Gaussian noise ($\sigma=0.4$) on Fz outputs.

    \item \textbf{Implementation details:} Metrics are evaluated over $N=200$ samples and averaged over 200 runs. Sampling is performed over $[\mathtt{wg\_src}]$.
\end{itemize}

\subsection{MOS 6502 CPU}
 
The MOS 6502 CPU, modeled at three discrete abstraction levels: the Visual6502 digital netlist, the manually extracted gate-level implementation, and a corresponding ISA-level simulation. All three levels are discrete and differ in their internal simulator fidelity: the transistor level uses binary switch segments, the gate level uses Boolean gate outputs, and the ISA level uses integer register and flag values (0--255). The causal variables exposed to the abstraction are, at every level, the shared integer register/flag interface (so $a$ and $\tau_X$ are identity on these values, see the abstraction-map note below); individual transistor segments or gate signals are never themselves abstraction variables. We evaluate abstractions between those three layers independently.

\begin{itemize}[leftmargin=*]
    \item \textbf{Highest-level model (ISA):} The ISA semantics of selected 6502 instructions, expressed as a causal SCM over register variables (accumulator, index registers, status flags) implemented by the ISA simulator (\href{https://github.com/ivop/fake6502}{\texttt{ivop/fake6502}}, commit 09fc542).

    \item \textbf{Intermediate-level model (gate):} A gate-level simulator (\texttt{\href{https://github.com/ivop/break6502}{ivop/break6502}}, commit 922af64), which implements the decoder PLA and datapath at the Boolean gate level.

    \item \textbf{Lowest-level model (transistor):} A transistor-level digital logic simulator (\href{https://github.com/mist64/perfect6502}{\texttt{mist64/perfect6502}}, commit 6c1f1a5), extracted directly from the 6502 CPU via the \href{http://visual6502.org/}{Visual6502} project. This implements a segment model in which each transistor is a binary switch.
 
    \item \textbf{Abstraction maps:} All simulators share the same interface: integer register values 0--255 per variable. The coarse-graining map and value maps are both effectively identity on these integer values. All simulators mask the processor status register P with $\mathtt{0xCF}$ (clearing bits 4 and 5) before comparison, since bit 4 (the B-flag) only reflects how P was pushed and is not a persistent flag; bit 5 is unused/hardwired 1. Without this mask, all status-affecting instructions would appear to disagree.
    
    \item \textbf{Invalid conditions:}\\
    (i)~\emph{Broken gate $\leftrightarrow$ ISA}: accumulator bit 7 is stuck at 0 in $\mathcal{M}$.\\
    (ii)~\emph{Broken transistor $\leftrightarrow$ gate}: same fault applied at the transistor level.\\
    (iii)~\emph{Broken transistor $\leftrightarrow$ ISA}: the same accumulator bit 7 stuck-at-0 fault at the transistor level, evaluated against the ISA high-level model.
 
    \item \textbf{Implementation details:} The abstraction is evaluated on a reduced set of single-instruction tests covering implied and immediate addressing modes. Metrics are evaluated over $N=200$ samples and averaged over 100 runs (per condition).

\end{itemize}

%% file: appendix/baseline_details.tex
\section{Baseline Metric Implementation Details}
\label{app:metric_impl}
 
We document design choices, normalization functions, and known limitations for the baseline metrics. All metrics receive the same sampler, $n$ samples, and output variable list as the \cae{} variants.

\paragraph{Observational metrics (MSE, RMSE, NMSE, $R^2$, $L^2$, JSD, KL, MMD, HSIC, VarDecomp).}
These are evaluated via a task wrapper that restricts interventions to root variables only and compares outputs at the designated output nodes. For the asymmetric or normalized metrics (KL, NMSE, $R^2$, $L^2$), the high-level model $\mathcal{E}$ is the normalization reference: KL computes $\mathrm{KL}(\mathcal{E}\,\|\,\mathcal{M})$, and NMSE, $R^2$, and $L^2$ divide by the variance (resp.\ squared norm) of $\mathcal{E}$'s outputs.
 
\textit{JSD/KL:} Uses adaptive histogram binning into $\lceil\log_2 n + 1\rceil$ bins, capped at 20. KL adds Laplace smoothing ($\varepsilon=10^{-10}$). When samples collapse to a point mass, JSD returns 0. The adaptive bin count depends on $n$, so the metric definition (and not only its estimate) varies with sample size; their power curves therefore conflate estimator variance with this definitional drift.
 
\textit{MMD:} Uses an RBF kernel and bandwidth $\sigma = \sqrt{\smash[b]{\text{median}(\|x_i-x_j\|^2)/2}}$ from the pooled sample (median heuristic). The unbiased $\widehat{\text{MMD}}^2$ estimator is clipped to $[0,\infty)$, then passed through $\tanh$.
 
\textit{HSIC:} Measures whether the residuals $R = \mathcal{M}(u) - \mathcal{E}(u)$ are statistically independent of the outputs of $\mathcal{E}$ (used as a proxy for the inputs $u$). The centered kernel alignment estimator $\widehat{\text{HSIC}} = \mathrm{tr}(HK_XH \cdot HK_RH)/(n-1)^2$ is computed, where $K_X$ is the RBF kernel matrix of the outputs of $\mathcal{E}$ and $K_R$ is the RBF kernel matrix of residuals; both kernel matrices are centered by $H = I - \mathbf{1}\mathbf{1}^\top/n$. The estimator is normalized by the product of the Frobenius norms of $HK_XH$ and $HK_RH$. Requires $\geq 20$ samples; returns NaN below this threshold.
 
\textit{$R^2$:} Normalized as $(1-R^2)/2$, clipped to $[0,1]$.
 
\textit{VarDecomp:} Reports $\mathrm{Var}(y-\hat{y})/\mathrm{Var}(y)$, which is robust to constant offsets. This is a fraction-of-variance-unexplained statistic, not a Sobol/ANOVA partition.

\textit{$L^2$:} Reports the relative squared error $\mathrm{mean}(\|y-\hat{y}\|^2/\|y\|^2)$ --- an NMSE-like ratio, not a Euclidean distance --- passed through $\tanh$.
 
\paragraph{Temporal metrics (TrajMSE, DTW, Autocorr, Spectral, SINDy).}
These are implemented for the predator-prey system in a way that rolls out both $\mathcal{E}$ and $\mathcal{M}$ from shared initial conditions.
 
\textit{DTW:} Standard dynamic programming with Euclidean local cost, normalized by the sum of sequence lengths $|s_1|+|s_2|$.
 
\textit{SINDy:} For each sampled trajectory, we extract consecutive macro-step pairs $(X_t, X_{t+1})$ from the trajectory of $\mathcal{M}$ (ABM), where each step corresponds to one full model epoch. The observed epoch change is $\dot X_t = X_{t+1} - X_t$. The predicted change in $\mathcal{E}$ is $\hat{\dot X}_t = \nu(X_t) - X_t$, where $\nu(X_t)$ runs the LV equations for one full epoch starting from the actual ABM state $X_t$. Only steps where both species exceed 1 individual are scored. The normalized Frobenius residual $\|\dot X - \hat{\dot X}\|_F \,/\, \|\dot X\|_F$ is mapped through $\tanh(\cdot)$.
 
\paragraph{Sobol first-order indices:} The Saltelli estimator shares the same $A$ and $B$ sample matrices for $\mathcal{E}$ and $\mathcal{M}$, making indices directly comparable. The metric reports mean $|S_i^\mathcal{E} - S_i^\mathcal{M}|$ over input variables. In several benchmark systems, we evaluate Sobol indices on fewer than $100\times k$ samples due to the shared compute budget; reported Sobol scores should therefore be read as under-sampled estimates, not as evidence that the metric is inherently uninformative.
 
\paragraph{Infidelity (two-model attribution adaptation):} For each abstract input variable $i$, a finite-difference perturbation of size $\delta_i$ is applied to the abstract label: $\delta_i = \max(\delta_\text{frac} \cdot |x_i|,\, \delta_\text{min})$ for continuous labels (with $\delta_\text{frac}=0.1$, $\delta_\text{min}=10^{-3}$) and $\delta_i = 1$ for discrete (integer) labels. Both $\mathcal{E}$ and abstracted $\mathcal{M}$ are evaluated at the original and perturbed inputs; per-input attributions $\phi_i = (\text{output}_\text{perturbed} - \text{output}_\text{original})/\delta_i$ are computed for each model and compared by squared difference, averaged over input variables; multi-dimensional outputs are first mean-collapsed to a per-variable scalar before forming $\phi_i$. The metric is normalized via $\tanh$. For the 6502 CPU, only register inputs $\{A, X, Y, S, P\}$ are perturbed; opcode and operand bytes are excluded because a perturbation of $\delta=1$ changes the instruction identity, producing spurious attribution disagreement even for correct abstractions.
 
\paragraph{Relational fidelity:} Pearson $r$ per output dimension, averaged, mapped to $(1-r)/2\in[0,1]$.
 
\paragraph{Mallows' $C_p$:} $p = 2 \times$ the number of scored output variables of $\mathcal{E}$; $\hat\sigma^2$ is estimated as the mean variance of the outputs of $\mathcal{M}$ across output dimensions, serving as a reference noise floor. Normalized as $\tanh(\max(0,C_p-p)/n)$.
 
\paragraph{Structural deviation and causal sensitivity index:} Structural deviation perturbs each parameter of $\mathcal{E}$ by 1\% relative and measures $|\caeupnf(\mathcal{E}_\text{perturb}) - \caeupnf(\mathcal{E}_\text{base})|$. Causal sensitivity zeros each parameter instead, but measures the same quantity. Neither metric divides by the baseline \caeupnf score.
 
\paragraph{IB vs.\ CIB Lagrangian:} IB uses passive co-occurrence statistics for both $I(X;T)$ and $I(T;Y)$; a spurious correlate of $Y$ in $\mathcal{M}$ can inflate $I(T;Y)$ even without causal influence. CIB replaces $I(T;Y)$ with $\caeupnf(Y\mid\mathrm{do}(T))$, estimated as $H(Y)-H_c(Y\mid\mathrm{do}(T))$ where the conditional entropy averages over 20 label bins under direct interventions. Both Lagrangians are normalized to $[0,1]$: the IB result is shifted by $\beta H(Y)$ and divided by $H(X) + \beta H(Y)$; the CIB result is shifted by $\beta$ and divided by $H(X) + \beta$, where $H(\cdot)$ denotes histogram entropy of the respective variable.
 
\paragraph{Complexity shift:} For each sample, both the original input $x$ and a Gaussian-perturbed input $x + \varepsilon$ ($\varepsilon \sim \mathcal{N}(0, 0.01^2)$) are run through both models. The complexity shift for each model is $(K(\text{output}_{x+\varepsilon}) - K(\text{output}_x)) / K(\text{input})$, where $K(\cdot)$ denotes the zlib level-9 compressed byte length. The metric reports the absolute difference between the mean complexity shifts of $\mathcal{E}$ and $\mathcal{M}$. Because the proxy depends on float formatting and output dimensionality, complexity-shift values are not comparable across systems and are interpreted only within-system as an ordinal signal.
 
\paragraph{Symbion:} Exhaustively enumerates all root-variable label combinations; applicable
only to systems with finite discrete label spaces.
 
\paragraph{Macroscopic invariance vs.\ BCC:} Both compare abstracted outputs of $\mathcal{M}$ for two micro-states that share the same abstract label. Macroscopic invariance perturbs a single variable in isolation: it intervenes on one variable at a time, drawing two micro-states from the subspace of each label while leaving the rest of the state unspecified. BCC instead tests consistency within a complete system state: each iteration draws a joint label assignment over all mapped input (root) variables, grounds the whole state to a micro-realization, then re-grounds it to a second micro-realization of the same complete label vector, and compares the abstracted outputs read from the non-root behavioral variables. For both discrete and continuous value maps, BCC skips pairs whose grounding is the identity, as these are trivially consistent.
 
\paragraph{DCC:} Always uses hard label equality for the next-state comparison.
 
\paragraph{IIA:} The interchange copies raw micro-values of $\mathcal{M}$ from the source run (not grounded preimages of the source abstract label). This tests whether the actual internal micro-state of $\mathcal{M}$ is causally exchangeable, which is a stronger condition than checking any preimage of the source label. This distinction vanishes for identity value maps.
 
\paragraph{Probing:} The probe is trained on $n_\text{train}$ samples (default 200, overridden per system) and evaluated on a held-out test set. When the probe is scored with hard label matching (Tracr), its predictions are rounded to the nearest integer label before scoring; when it is scored with MSE (logic circuit and GRN, whose integer labels are treated as regression targets, as well as the continuous identity value maps), the probe is a standard linear regression whose continuous predictions are compared to the labels without rounding.

\subsection{Computation Costs}

All experiments were run on a consumer-grade laptop equipped with 16 GB of RAM and an Intel i5-13600H CPU. The computation time required to produce experimental results is estimated as such:
\begin{itemize}
    \item Evaluation of baseline metrics on benchmark systems for valid and invalid abstraction: approx. 100 hours, of which nearly half was allocated to system 3 (gas simulation)
    \item Measurements of statistical power and convergence: approx. 50 hours, of which half for system 3
    \item Scaling laws for Tracr: Approx. 5 hours
\end{itemize}

In addition, functional tests were implemented and run to check for correctness of the CPU implementations. Their total runtime is estimated at approx. 50 hours.

Finally, preliminary experiments, including debugging and prototyping, are estimated to have required twice as much compute as the production of final results (approx. 300 hours).

%% file: appendix/case_study.tex
\section{Case Study: Gas simulation (system 3)}
\label{app:gas_study}

We report additional experiments for the gas simulation system, which tests whether thermodynamic laws emerge as valid causal abstractions of a Lennard-Jones fluid simulation. Unless otherwise stated, experiments are performed at the Boyle temperature $T_\text{Boyle}\approx3.418$ \citep{GLASSER2002381} and with a low density of $\rho=0.05$, where inter-particle interactions are rare and ideal thermodynamic behavior is expected.

We evaluate two high-level models: the ideal gas law (IGL, $PV \propto T$) and the Van der Waals equation (VdW, $(P + a\rho^2)(1/\rho - b) \propto T$). These models represent successive levels of physical refinement: the IGL assumes non-interacting point particles, while VdW is a historical correction that explicitly accounts for finite particle volume and pairwise attraction. Neither is exact, but they serve as a useful pair to test whether \cae{} can discriminate the quality of competing valid abstractions.

For each equation, we consider three causal directions corresponding to different
thermodynamic ensembles: NVT ($P \leftarrow f(V, T)$), NPT ($V \leftarrow f(P, T)$), and
a non-standard inverse problem PVT ($T \leftarrow f(P, V)$), implemented via a
proportional-gain temperature controller.

\subsection{Valid Abstractions}

Table~\ref{tab:gas_main_scores} confirms that both models achieve near-zero \cae{} across all three causal directions in the dilute limit. This demonstrates that at low density, the validity of the statistical mechanical mapping matters more than the precise choice of equation of state. The VdW equation is marginally more faithful across most conditions, reflecting its tighter fit to the LJ fluid.

\begin{table}[!htbp]
    \centering
    \begin{tabular}{l|c|c|c}
        \textbf{\cae{}}
            & NVT ($P \leftarrow f(V, T)$) & NPT ($V \leftarrow f(P, T)$) & PVT ($T \leftarrow f(P, V)$) \\
        \hline
        Ideal gas law          & 0.0231 & 0.0397 & 0.0278 \\
        Van der Waals equation & 0.0135 & 0.0520 & 0.0435
    \end{tabular}
    \caption{\caedown{} for two high-level gas models across three causal directions (10 samples per evaluation).}
    \label{tab:gas_main_scores}
\end{table}

\subsection{Destructive Perturbations}

We next systematically violated the physical assumptions required for thermodynamic laws to emerge from particle dynamics. Results are shown in Figure~\ref{fig:destructive_gas} and Table~\ref{tab:destructive_gas}.

\paragraph{Density sweep ($\rho$).}
Increasing density from $\rho = 0.05$ to $\rho = 0.85$ in NVT mode introduces progressively stronger inter-particle interactions. The IGL error rises steadily, while VdW remains below $0.05$ for significantly longer, exploiting its explicit correction terms. Both abstractions effectively fail (error $\approx 1$) for $\rho > 0.45$.

\paragraph{Temperature sweep ($T$).}
At low temperatures ($T < 1.0$), particles cluster in attractive wells, violating the homogeneity assumptions of both laws, and both fail (error $\approx 0.8$). As temperature increases, VdW recovers faster, reaching its minimum around $T \approx 2.5$. The IGL minimum falls in $[3.25, 3.75]$, consistent with the Boyle temperature where attractive and repulsive contributions cancel~\citep{GLASSER2002381}. Both models remain valid at high temperatures ($T > 4.0$, error $< 0.1$) where kinetic energy dominates.

\paragraph{Distorted exponent ($\alpha$).}
Replacing the IGL's temperature dependence with $P \propto T^\alpha$ and sweeping $\alpha \in [0.5, 1.5]$ tests sensitivity to functional form. The \cae{} correctly identifies the physical ground truth, reaching near zero at $\alpha = 1.0$ and increasing monotonically away from it: sub-linear deviations ($\alpha < 1$) degrade error nearly linearly (reaching $\approx 0.7$ at $\alpha = 0.5$), while super-linear deviations are penalized more gradually ($\approx 0.42$ at $\alpha = 1.5$).

\begin{figure}[htbp]
    \centering
    \includegraphics[width=.32\linewidth]{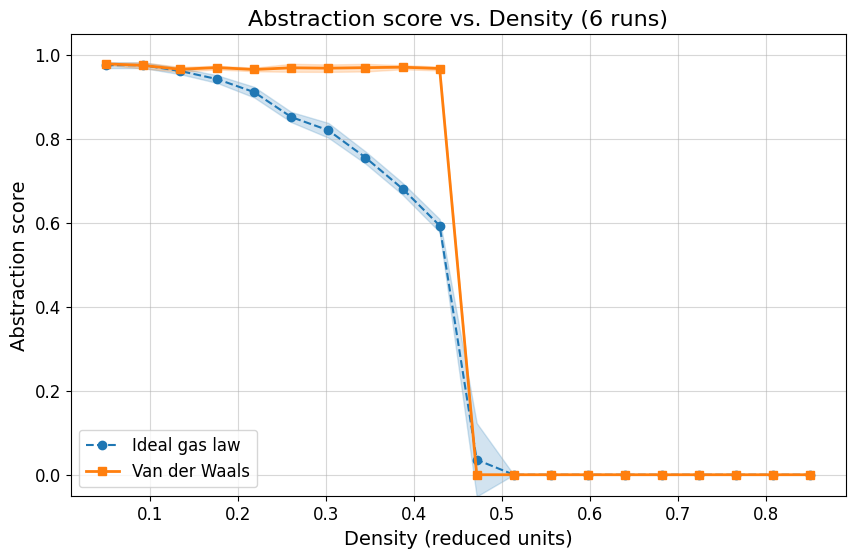}
    \includegraphics[width=.32\linewidth]{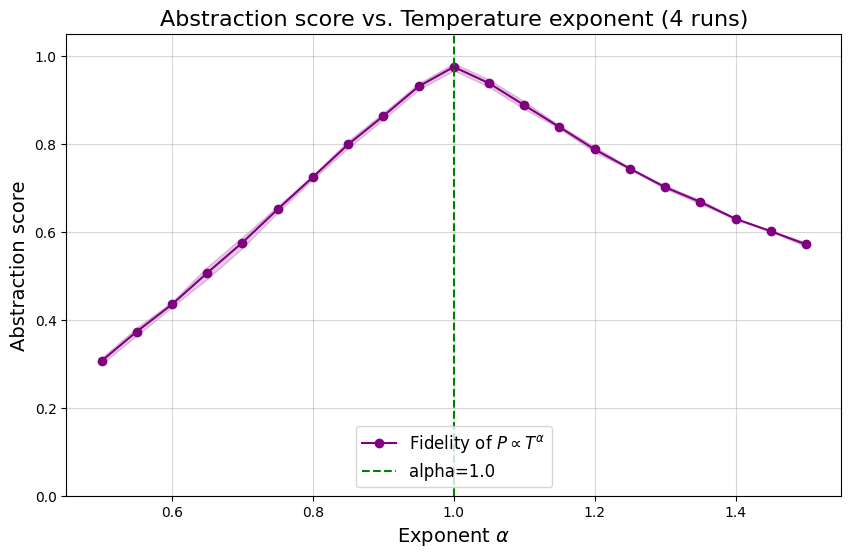}
    \includegraphics[width=.32\linewidth]{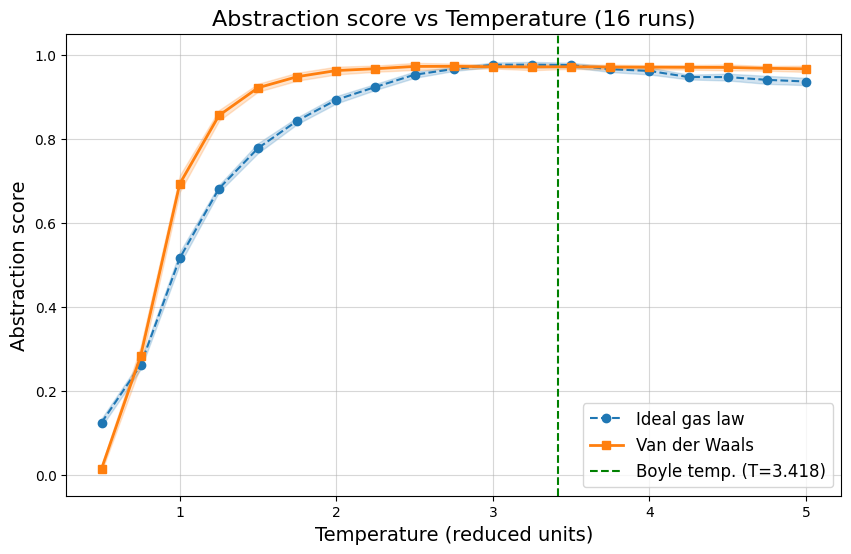}
    \caption{Average \caedown{} for IGL and VdW under (a) increasing density, (b) varying temperature exponent $\alpha$, and (c) varying initial temperature $T$. All experiments in NVT mode, 10 samples per evaluation. Shaded areas denote $\pm 1\sigma$.}
    \label{fig:destructive_gas}
\end{figure}

\begin{table}[htbp]
    \centering
    \begin{tabular}{l|c|c|c}
        \textbf{\caedown{}} & NVT ($P \leftarrow f(V, T)$) & NPT ($V \leftarrow f(P, T)$) & PVT ($T \leftarrow f(P, V)$) \\
        \hline
        Ideal gas law          & 0.5920 & 0.5432 & 0.5497 \\
        Van der Waals equation & 0.5892 & 0.5478 & 0.5551
    \end{tabular}
    \caption{\caedown{} for both models in non-equilibrium conditions (10 samples per evaluation).}
    \label{tab:destructive_gas}
\end{table}

Together, these results illustrate that \caedown{} behaves as a graded physical validity score: it correctly validates both abstractions in their regime of applicability, distinguishes the more faithful VdW model at intermediate densities and temperatures, and detects violations as soon as the underlying physical assumptions break down.